\documentclass{article} 
\usepackage{iclr2026_conference,times}
\usepackage{multirow}

\usepackage{amsmath,amsfonts,bm}

\def\eqref#1{equation~\ref{#1}}

\def\1{\bm{1}}

\DeclareMathAlphabet{\mathsfit}{\encodingdefault}{\sfdefault}{m}{sl}
\SetMathAlphabet{\mathsfit}{bold}{\encodingdefault}{\sfdefault}{bx}{n}

\newcommand{\Var}{\mathrm{Var}}

\usepackage{amsmath,amssymb,amsthm}
\usepackage{mathtools}
\usepackage{tikz}
\usetikzlibrary{arrows.meta, positioning, shapes.geometric, fit, backgrounds}
\usepackage{multicol}
\usepackage{booktabs}
\usepackage{array}
\usepackage{xcolor}
\usepackage{hyperref}
\usepackage{mdframed}
\usepackage{url}
\usepackage{float}
\usepackage{tcolorbox}
\tcbuselibrary{skins, breakable}

\definecolor{theoremblue}{RGB}{220,235,250}
\definecolor{remarkgray}{RGB}{245,245,245}
\definecolor{accentblue}{RGB}{30,80,160}
\definecolor{failred}{RGB}{180,40,40}
\definecolor{successgreen}{RGB}{30,120,60}

\newtheorem{theorem}{Theorem}[section]
\newtheorem{proposition}[theorem]{Proposition}
\newtheorem{lemma}[theorem]{Lemma}

\newtheorem{definition}[theorem]{Definition}
\theoremstyle{remark}
\newtheorem{remark}[theorem]{Remark}
\newtheorem{assumption}{Assumption}

\tcbset{
  mainresultstyle/.style={
    enhanced,
    breakable,
    colback=theoremblue,       
    colframe=accentblue,       
    arc=6pt,                   
    boxrule=1pt,               
    left=10pt, right=10pt,
    top=8pt, bottom=8pt,
    before skip=10pt,
    after skip=10pt,
    fonttitle=\bfseries,
    coltitle=white,
    attach boxed title to top left={
      yshift=-2mm, xshift=8pt
    },
    boxed title style={
      colback=accentblue,
      arc=4pt,
      boxrule=0pt,
    },
  }
}

\newtcolorbox{mainresult}[1][]{
  mainresultstyle,
  title={Main Result},   
  #1
}

\newmdenv[
  backgroundcolor=remarkgray,
  linecolor=gray!50,
  linewidth=0.5pt,
  innerleftmargin=10pt,
  innerrightmargin=10pt,
  innertopmargin=6pt,
  innerbottommargin=6pt,
  skipabove=8pt,
  skipbelow=8pt
]{intuition}

\title{The JEPA Paradox in Language: The Geometry of Linguistic Alternatives}

\author{
Anh Trac Duc Dinh$^{1,2*}$ \& Khang Nhat Hoang Vo$^{3}$
\thanks{Equal contribution. Authors listed alphabetically.}
\thanks{Corresponding author: \texttt{khang.vo@mbzuai.ac.ae}.}
\\
$^{1}$Center for AI Research (CAIR), VinUniversity, Hanoi, Vietnam
\\
$^{2}$Faculty of Computer Science and Engineering\\
Ho Chi Minh City University of Technology (HCMUT), VNU-HCM \\ Ho Chi Minh City, Vietnam \\
\texttt{anh.dinhtracduc@hcmut.edu.vn}
\\
$^{3}$Mohamed bin Zayed University of Artificial Intelligence \\ Abu Dhabi, United Arab Emirates\\
\texttt{khang.vo@mbzuai.ac.ae}
}

\usepackage{lipsum}

\iclrfinalcopy 
\begin{document}

\maketitle

\begin{abstract}
Joint-Embedding Predictive Architectures (JEPAs) are effective for images, video, and audio, yet deterministic JEPA-style latent prediction has not become a standard objective for text encoders. We argue that this gap reflects a mismatch between squared-error latent prediction and the conditional structure of language. The key requirement is conditional concentration: given a context and target location, the target representation should lie near a single meaningful point. Local image prediction often satisfies this through spatial continuity, whereas masked text can admit multiple valid token or span completions whose representations need not share a coherent center. We formalize this mismatch through three conditions---predictability, non-collapse, and low conditional variance---and show how their failure creates centroid degeneracy and collapse pressure in text. Matched I-JEPA and T-JEPA experiments reveal the predicted sequence: mutual-information saturation and elevated target variance precede train--validation instability, effective-rank degeneration, cosine collapse, and poor downstream transfer. The same pattern appears across five independent data seeds, indicating that it is not a sampling artifact. These results do not rule out predictive learning for language; they show that text-compatible JEPA objectives must preserve multiple plausible completions rather than compress them into a single latent point.
\end{abstract}

\section{Introduction}

Self-supervised learning often succeeds by predicting missing information from context. In language, this principle is usually implemented as prediction over discrete symbols or probability distributions, as in masked language modeling (MLM)~\citep{Devlin2019BERT}, causal language modeling (CLM)~\citep{Radford2019GPT2,Brown2020GPT3}, denoising pretraining~\citep{Lewis2020BART,Raffel2020T5}, and replaced-token detection~\citep{Clark2020ELECTRA}. In vision, Joint-Embedding Predictive Architectures instead avoid reconstructing raw inputs: a context encoder observes a masked view, a target encoder embeds the unmasked signal, and a predictor learns to match the target representation in latent space~\citep{LeCun2022APT,Assran2023IJEPA}. This latent-prediction paradigm has achieved strong results on continuous modalities, including images, video, and audio~\citep{Assran2023IJEPA,Bardes2024VJEPA,Tuncay2025AudioJEPA}. Yet deterministic JEPA-style latent prediction has not become a standard recipe for text, where the dominant objectives remain distributional.

\begin{figure}[t]
    \centering
    \includegraphics[width=\linewidth]{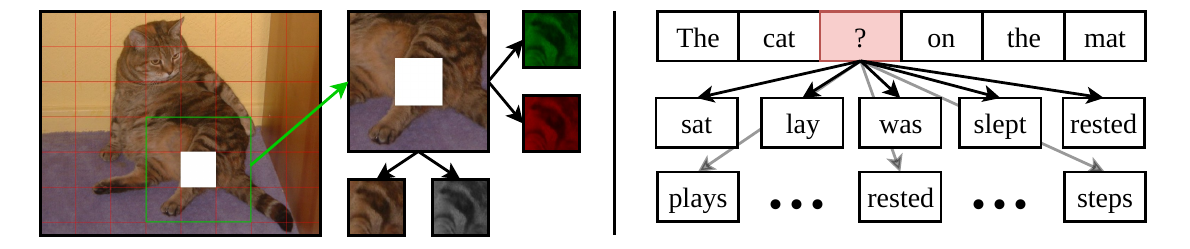}
    \caption{%
   Conditional concentration explains the image-text divide in JEPA.
    In images, spatial continuity constrains masked patches to a low-variance conditional distribution.
    In text, masking leaves many valid continuations 
    whose embeddings occupy distinct directions in representation space, forcing a squared-error
    predictor to regress toward a centroid over heterogeneous continuations rather than a coherent target.}
    \label{fig:conditional-concentration}
\end{figure}

We argue that this gap reflects a statistical mismatch. JEPA is well suited to targets that are \emph{conditionally concentrated}: given a context and a target position, the target representation should lie near a single geometrically meaningful value. Natural images often satisfy this condition locally because nearby pixels and patches are correlated, and masked regions are constrained by surrounding content~\citep{Feige2015Images,He2022MAE,Assran2023IJEPA}. Token-level language does not. The same textual context can admit many valid continuations, reflected in the persistent entropy and perplexity of language even under strong models~\citep{Shannon1951Prediction,Brown2020GPT3}. As illustrated in Figure~\ref{fig:conditional-concentration}, an image patch typically has a low-variance conditional target, whereas a masked token may correspond to multiple valid lexical or semantic continuations whose embeddings occupy different directions. A squared-error JEPA predictor must return a single vector, and therefore regresses toward a centroid over heterogeneous continuations rather than a coherent linguistic target.

We formalize this mismatch through three necessary conditions for useful JEPA learning:
\textbf{predictability}: the target must have low conditional entropy given the context
(spatial smoothness supports this in images; lexical ambiguity imposes an irreducible
floor in text); \textbf{non-collapse}: the objective must force distinct inputs to occupy
distinct regions of representation space (visual diversity creates such pressure, whereas
text permits low-rank collapse without geometric penalty); and \textbf{low conditional
variance}: the reducible loss component must dominate the irreducible one (in images,
residual uncertainty is limited to local texture; in text, multiple valid continuations
dominate the loss and drive the predictor toward centroid degeneracy).

We validate these predictions by constructing a language analogue of JEPA, which we denote T-JEPA.\footnote{We use T-JEPA only as a local abbreviation for our text-based JEPA instantiation. It is unrelated to prior methods with the same name for trajectory similarity computation and tabular representation learning~\citep{Li2024TJEPA,Thimonier2025TJEPA}.}, and comparing it with I-JEPA~\citep{Assran2023IJEPA} under matched training protocols, tracking mutual-information proxies, effective rank, pairwise cosine similarity, train-validation loss divergence, irreducible target variance, and downstream transfer. T-JEPA exhibits early information saturation followed by unstable optimization, rank divergence, cosine-similarity collapse, and poor transfer, while I-JEPA maintains increasing information, non-degenerate rank, and stable generalization -- suggesting that T-JEPA fails not from insufficient capacity or tuning, but because deterministic squared-error latent prediction is misaligned with the conditional geometry of token-level language.

Our contributions are threefold. First, we introduce \emph{conditional concentration} as a condition under which deterministic latent prediction yields useful representations. Second, we identify centroid degeneracy as a failure mode of squared-error latent prediction under ambiguous text continuations. Third, we empirically diagnose T-JEPA through information, rank, cosine-similarity, variance, and downstream metrics, showing that information saturation precedes representation collapse and motivating distributional, contrastive, mixture-based, or semantic-level alternatives. Our results do not imply that predictive representation learning is impossible for language. Rather, they identify the missing inductive bias: language prediction must preserve the multimodal structure of valid continuations instead of compressing them into a single latent centroid.
\section{Related Work}

\paragraph{Self-supervised representation learning.}
Self-supervised learning has developed along several complementary paradigms. Contrastive methods such as CPC/InfoNCE~\citep{Oord2018CPC}, SimCLR~\citep{Chen2020SimCLR}, and MoCo~\citep{He2020MoCo} learn by distinguishing compatible views from negative examples. Non-contrastive methods such as BYOL~\citep{Grill2020BYOL}, Barlow Twins~\citep{Zbontar2021BarlowTwins}, and VICReg~\citep{bardes2022vicreg} remove explicit negatives through architectural asymmetry, stop-gradient mechanisms, or variance--covariance regularization. Joint-Embedding Predictive Architectures~\citep{LeCun2022APT} form a third family: instead of reconstructing inputs or contrasting views, they predict target representations in latent space from a masked context.

\paragraph{JEPA-style latent prediction across modalities.}
I-JEPA demonstrated that masked latent prediction can learn strong visual representations by predicting the embeddings of masked image regions from visible context~\citep{Assran2023IJEPA}. Subsequent work has extended JEPA-style objectives to video and world modeling~\citep{Bardes2024VJEPA,Assran2025VJEPA2,Maes2026LeWorldModel}, audio and speech~\citep{Fei2023AJEPA,Tuncay2025AudioJEPA}, and multimodal text-image settings~\citep{Vo2025TIJEPA,Chen2025VLJEPA}. Recent variants also study stability, regularization, and collapse prevention in JEPA-like objectives~\citep{Balestriero2025LeJEPA,Mo2024CJEPA}. This growing family shows that latent prediction is not limited to images. However, most successful JEPA-style applications involve continuous, spatial, temporal, or cross-modal structure, where masked targets are often constrained by nearby context. Our work asks why the same deterministic latent-prediction recipe has not become a standard general-purpose objective for training text encoders.

\paragraph{Representation collapse and variance preservation.}
Collapse is a central failure mode in self-supervised learning, where representations become constant or occupy a low-dimensional subspace~\citep{Jing2022DimensionalCollapse}. Existing methods avoid collapse through negatives, predictor asymmetry, EMA targets, redundancy reduction, or explicit variance constraints~\citep{Chen2020SimCLR,He2020MoCo,Grill2020BYOL,Zbontar2021BarlowTwins,bardes2022vicreg}. Among these, SIGReg~\citep{Balestriero2025LeJEPA} is a particularly principled and effective collapse-prevention mechanism, but because it constrains only the marginal representation distribution rather than any context-conditional quantity, our analysis in App.~\ref{app:sigreg} shows it is not, by itself, guaranteed to yield predictable or non-degenerate representations when applied to text. JEPA-style methods inherit stabilization from stop-gradient and EMA target encoders, but this stabilization is most effective when the prediction task supplies a structured target signal. We identify a complementary route to collapse: when the conditional target distribution is multimodal, squared-error latent prediction can reduce loss by moving toward a conditional mean or by allowing the encoder to merge distinctions among plausible targets. Thus, our analysis connects collapse to the geometry of the target distribution, not only to optimization dynamics or architectural symmetry.

\paragraph{Language pretraining preserves uncertainty.}
Language pretraining has instead converged on objectives that preserve uncertainty over discrete symbols. Masked language modeling~\citep{Devlin2019BERT}, causal language modeling~\citep{Radford2019GPT2,Brown2020GPT3}, denoising sequence-to-sequence pretraining~\citep{Lewis2020BART,Raffel2020T5}, and replaced-token detection~\citep{Clark2020ELECTRA} predict token distributions, corrupted-token labels, or reconstructed sequences rather than a single continuous target. data2vec also predicts contextualized latent teacher representations from masked inputs and was evaluated across speech, vision, and language~\citep{Baevski2022Data2Vec}, making it an important precursor to modality-general latent prediction. More recent text-involving JEPA variants, such as task-specific Text-JEPA for NL-to-FOL conversion and multimodal TI-JEPA/VL-JEPA models~\citep{Le2026TextJEPA,Vo2025TIJEPA,Chen2025VLJEPA}, show that JEPA-style objectives can incorporate language in specific settings. Two very recent efforts, LLM-JEPA~\citep{huang2026llmjepa} and DLLM-JEPA~\citep{nam2026dllmjepa}, apply a JEPA-style loss directly to text, but in both cases the JEPA term is trained only as an auxiliary addition on top of a distributional generative anchor rather than observed in isolation, so neither resolves why a purely latent-prediction objective transfers from continuous modalities to text at all; we return to a detailed comparison with both systems in App.~\ref{app:discussion-llm-jepa}. They do not remove the central question studied here: when is a masked textual target itself well represented as a single deterministic latent point?

\section{JEPA as Deterministic Latent Prediction}
\label{sec:setup}

We study JEPAs as deterministic latent predictors. A JEPA observes a masked context, encodes the unmasked target with a separate target encoder, and trains a predictor to match the target representation in latent space. This captures the standard image JEPA setting~\citep{Assran2023IJEPA} and defines the text analogue analyzed in this work.

Let $x=(x^{(1)},\ldots,x^{(N)})$ be an input decomposed into $N$ units. For images, units are non-overlapping patches on a two-dimensional grid; for text, units are discrete tokens in a sequence. A masking procedure samples disjoint context and target sets $C,T\subseteq[N]$. The context view $x_C$ is passed through a context encoder $f_\theta$, producing
\begin{equation}
    z_C=f_\theta(x_C).
    \label{eq:context-representation}
\end{equation}
The full unmasked input $x$ is passed through an EMA target encoder $f_{\bar\theta}$, and the target representation at position $j\in T$ is
\begin{equation}
    z_T^{(j)}=f_{\bar\theta}(x)^{(j)}\in\mathbb{R}^d .
    \label{eq:target-representation}
\end{equation}
The target encoder is updated by exponential moving average,
\begin{equation}
    \bar\theta \leftarrow \tau\bar\theta+(1-\tau)\theta, \ \text{with} \ \tau\in(0,1),
    \label{eq:ema-update}
\end{equation}
and is not differentiated through. A predictor $g_\phi$ receives the context representation and a positional query $p_j$, then predicts
\begin{equation}
    \hat z_T^{(j)}=g_\phi(z_C,p_j).
    \label{eq:predicted-target}
\end{equation}

The training objective is the squared-error latent prediction loss
\begin{equation}
  \mathcal{L}_{\mathrm{JEPA}}(\theta,\phi)
  =
  \mathbb{E}_{x,C,T}
  \left[
    \frac{1}{|T|}
    \sum_{j\in T}
    \left\|
      g_\phi(f_\theta(x_C),p_j)
      -
      \operatorname{sg}\!\left(f_{\bar\theta}(x)^{(j)}\right)
    \right\|_2^2
  \right],
  \label{eq:jepa}
\end{equation}
where $\operatorname{sg}(\cdot)$ denotes stop-gradient. Stop-gradient and EMA stabilize training, but the objective does not explicitly require the representation to remain informative, high-rank, or semantically separated.

The key property of Eq.~\ref{eq:jepa} is that it is a point-prediction objective. For fixed $(z_C,p_j)$, the squared-error optimal predictor is the conditional mean
\begin{equation}
    g_\phi^\star(z_C,p_j)
    =
    \mathbb{E}\!\left[z_T^{(j)}\mid z_C,p_j\right].
    \label{eq:conditional-mean-predictor}
\end{equation}
Thus, deterministic JEPA is most suitable when the conditional mean is a representative target. This requires \emph{conditional concentration}: given the context and target location, valid target representations should cluster around a single meaningful point. When this holds, squared-error prediction provides a stable semantic learning signal. Images often satisfy this condition because neighboring patches, object boundaries, textures, and spatial continuity strongly constrain masked regions.

Masked text is less likely to be conditionally concentrated. The same context may admit many valid lexical or semantic completions, while the positional query specifies only where the missing content belongs. The resulting target distribution can therefore be broad or multimodal, making its conditional mean a centroid over incompatible alternatives rather than a coherent completion. Preserving these distinctions weakens point prediction; averaging them away encourages indistinct representations and, ultimately, collapse.

\section{Theoretical Analysis: When Does JEPA Work?}

\label{sec:theory}

We analyze deterministic JEPA through three necessary conditions: predictability (P), non-collapse (NC), and low conditional variance (LV). These conditions are naturally supported in local image prediction, where spatial smoothness and content-dependent visual structure constrain masked targets. In masked text, however, the same context may admit multiple valid token or span completions. If these completions are represented as distinct latent targets, the conditional distribution is not concentrated around a single point; if they are mapped close together, the representation risks collapse. Formal statements and proofs appear in App.~\ref{app:info}- \ref{app:biasvar}; metric definitions appear in App.~\ref{app:metrics}.

The three conditions are not independent failure modes. They describe a single mechanism. If the context does not concentrate the target, the squared-error predictor learns a conditional mean over alternatives. If the representation preserves those alternatives, the loss contains irreducible conditional variance; if the representation removes them, the model reduces loss by merging distinctions. Thus, ambiguity creates pressure toward either noisy point prediction or representational collapse.

\subsection{Argument I: Predictability Requires Conditional Concentration}

\label{sec:arg1}

JEPA can reduce its squared-error loss only when the target representation is predictable from the context. By the Data Processing Inequality, latent representations cannot contain more predictive information than the raw inputs from which they are computed. Thus, the usefulness of deterministic latent prediction is limited by the conditional structure of the data.

For images, local smoothness makes nearby patches informative about one another. Under a Lipschitz target encoder, nearby image patches have nearby target representations:
\begin{equation}
  \mathbb{E}\!\left[
    \left\|
      f_{\bar\theta}(x)^{(j)}
      -
      f_{\bar\theta}(x)^{(i)}
    \right\|_2^2
  \right]
  \leq
  L^2 d(i,j)^2 .
  \label{eq:spatial-concentration-main}
\end{equation}
When the context contains patches near $j$, the masked target is therefore conditionally constrained by visible content. For text, a masked token or span can have several valid completions under the same context. This ambiguity does not automatically imply large latent variance, since an encoder could collapse distinct completions. The key trade-off is therefore: preserving textual distinctions makes the target less predictable as a single point, while removing them reduces uncertainty by discarding information.

\paragraph{Validation.}
We track the InfoNCE mutual-information proxy $\hat I(z_C;z_T)\geq \log N-\mathcal{L}_{\mathrm{InfoNCE}}$~\citep{Poole2019VariationalMI,Oord2018CPC} together with effective rank. A saturated MI proxy with non-trivial rank suggests limited predictive information; simultaneous rank collapse suggests that the model is reducing uncertainty by merging distinctions.

\subsection{Argument II: Non-Collapse Requires a Variance-Preserving Signal}
\label{sec:arg2}

Stop-gradient and EMA target encoders stabilize JEPA training, but the squared-error objective does not explicitly enforce non-degenerate covariance. Whether collapse is discouraged depends on whether the prediction task itself penalizes collapsed representations. In images, a collapsed context representation cannot adapt to content-dependent target variation. If visual targets retain nonzero variation beyond position, then a constant context representation incurs prediction error bounded below by the target variance. This gives image JEPA a natural anti-collapse signal: different visual contexts are needed to predict different masked regions. Text admits a different route. A text encoder can reduce the conditional variance of ambiguous completions by mapping distinct but plausible targets into a lower-dimensional representation. This can preserve low JEPA loss while weakening the distinctions the representation should encode. Thus, collapse in text can arise not only from optimization dynamics, but from the objective's incentive to make multimodal targets easier to predict as a single point.

\paragraph{Validation.}
We track effective rank,
\begin{equation}
    \operatorname{erank}(\Sigma_z)
    =
    \exp\!\left(-\sum_i p_i\log p_i\right)
    \label{eq:erank-main}
\end{equation}
where
\begin{equation}
    p_i=\lambda_i/\sum_j\lambda_j 
\end{equation}
along with mean pairwise cosine similarity, its 95th percentile, and its standard deviation over held-out representations (see  App.~\ref{app:pairwise}). Collapse is indicated by decreasing effective rank, increasing cosine similarity, and shrinking cosine dispersion.

\subsection{Argument III: Squared-Error Prediction Learns a Conditional Mean}
\label{sec:arg3}

The JEPA loss decomposes into reducible approximation error and irreducible conditional variance:
\begin{equation}
  \mathbb{E}
  \left[
    \left\|
      g_\phi(z_C,p_j)-z_T^{(j)}
    \right\|_2^2
  \right]
  =
  \mathbb{E}
  \left[
    \left\|
      g_\phi(z_C,p_j)
      -
      \mathbb{E}[z_T^{(j)}\mid z_C,p_j]
    \right\|_2^2
  \right]
  +
  \mathbb{E}
  \left[
    \operatorname{Var}(z_T^{(j)}\mid z_C,p_j)
  \right].
  \label{eq:bias-variance-main}
\end{equation}
The first term can be reduced by improving the predictor; the second is irreducible for fixed representations. Deterministic JEPA is therefore well matched to conditionally concentrated targets.

For images, the residual uncertainty of a masked patch is often limited to local texture and fine detail, so the conditional variance remains small. For text, let $S$ be a masked span and let $s\sim p(s\mid x_C)$ be a candidate completion. If $h_S(s;x_C)$ denotes the target representation induced by completing the span with $s$, then the squared-error optimal predictor is
\begin{equation}
    g_\phi^\star(z_C,S)
    =
    \sum_s p(s\mid x_C)h_S(s;x_C).
    \label{eq:centroid-main}
\end{equation}
Thus, the predictor returns a centroid over plausible span completions. If multiple completions have non-negligible probability and separated target representations, this centroid need not correspond to any coherent completion. The issue is not autoregression: even when all tokens in a span are predicted in parallel, deterministic squared-error prediction compresses a multimodal conditional distribution into a single Euclidean point.

\paragraph{Validation.}
We estimate $\widehat{\operatorname{Var}}(z^*\mid z_C,S)$ using $K=16$ candidate completions per context. For text, completions are sampled from a frozen masked-language-model oracle and passed through the target encoder. For images, target variability is estimated from augmented variants of the masked region. Estimates are averaged over held-out contexts on both training and validation splits.

\section{Empirical Evidence}
\label{sec:emp_evidence}
\subsection{Experimental Setup}

\label{sec:arg_setup}

We compare image and text instantiations of the same deterministic latent-prediction framework. The goal is not to optimize either modality independently, but to evaluate whether the diagnostics predicted by our theory appear under matched JEPA-style training.

\paragraph{I-JEPA.}
For the image setting, we train an I-JEPA model~\citep{Assran2023IJEPA} on 100K ImageNet-1K images~\citep{Deng2009ImageNet}, using a 90/10 train-validation split and seed 42 . Images are resized to a short side of 256 and randomly cropped to $224\times224$. Both context and target encoders use a ViT-H/16 backbone~\citep{Dosovitskiy2021ViT} with hidden dimension $d=1{,}280$, 32 layers, and 16 attention heads. The target encoder is updated by EMA ($\tau=0.996$). We follow block-wise spatial masking, using encoder scale 0.85-1.0, predictor scale 0.15-0.2, and 4 target blocks per image.

\paragraph{T-JEPA.}
For the text setting, we construct a direct text analogue, T-JEPA, trained on 100K English C4 sentences~\citep{Raffel2020T5}, using a 90/10 train--validation split and maximum sequence length 256. Main-text diagnostics use seed 42. To test robustness, we repeat T-JEPA training with five independent seeds, each resampling its own 100K-sentence subset; the resulting MI, variance, rank, loss, and cosine trajectories are reported in App.~\ref{app:multiseed}. The context and target encoders use a BERT-Large backbone~\citep{Devlin2019BERT} with hidden dimension $d=1{,}024$ and 24 layers. The target encoder is updated by EMA with $\tau=0.996$. The predictor is a 6-layer Transformer with hidden dimension 384 and 8 attention heads. We apply random non-overlapping span masking with 1--5 spans per sentence and span lengths of 1--5 tokens.


\paragraph{Training and diagnostics.} Both models are trained for 15 epochs with AdamW, using a
10-epoch linear warmup from $2\times10^{-4}$ to $10^{-3}$ followed by cosine decay to $10^{-6}$,
gradient clipping at 0.3, and weight decay increasing from 0.04 to 0.4. This schedule was chosen
to make collapse dynamics observable: fixed learning rates or shorter warmups caused T-JEPA to
collapse before the diagnostics could resolve the transition. We monitor five quantities predicted
by Sec.~\ref{sec:theory} -- the InfoNCE mutual-information proxy $\widehat{I}(z_C;z_T)$ (MoCo queue
of 2{,}048, temperature 0.1), effective rank, pairwise cosine similarity, train--validation JEPA
loss, and conditional target variance -- which serve as complementary lenses on a single collapse
dynamic, so their joint convergence on the same transition corroborates one event rather than five
coincidences. Table~\ref{tab:failure-chain} summarizes the predicted temporal failure signature:
T-JEPA does not collapse first and then lose predictability; rather, limited predictability and
elevated conditional variance appear before the representation degenerates.

\begin{table}[t]
\centering
\caption{
Temporal failure signature predicted for T-JEPA. The ordering links the theory to the empirical diagnostics: limited predictability appears before collapse, suggesting that collapse is a response to ambiguous targets rather than the original cause.
}
\label{tab:failure-chain}
\setlength{\tabcolsep}{10pt}
\begin{tabular}{lll}
\toprule
Stage & Diagnostic & Interpretation \\
\midrule
1 & MI proxy saturates & Context does not concentrate the target \\
2 & Conditional variance remains high & Plausible completions remain separated \\
3 & Train-validation loss diverges & Optimization fits unstable signal \\
4 & Effective rank degenerates & Encoder merges distinctions among targets \\
5 & Downstream transfer fails & Representation loses useful text structure \\
\bottomrule
\end{tabular}
\end{table}

\subsection{Argument I Validation}
\label{sec:arg-I_val}

Figure~\ref{fig:mi_rank} compares the evolution of context--target predictability and representation rank for I-JEPA and T-JEPA. Faint curves denote training metrics and bold curves denote validation metrics; our interpretation focuses on the validation split, which is disjoint from training.

\begin{figure}[t]
    \centering
    \includegraphics[width=\linewidth]{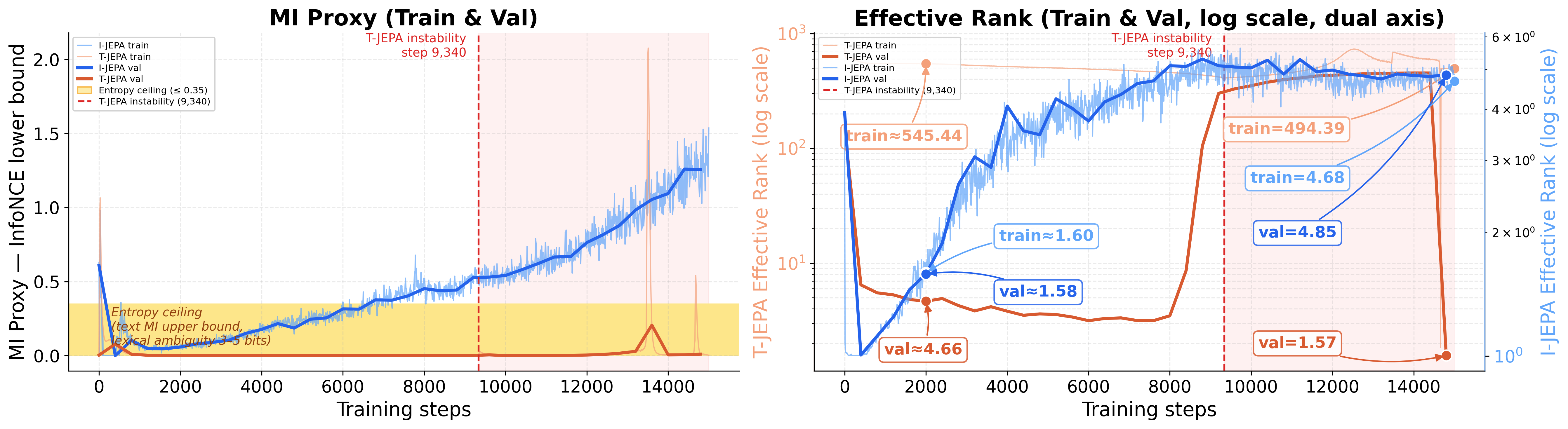}
    \caption{%
        Context--target predictability and representation rank for I-JEPA and T-JEPA.
        Left: InfoNCE mutual-information proxy $\widehat{I}(z_C;\,z_T)$.
        Right: effective rank of the representation covariance $\Sigma_z$.
        Faint curves show training metrics; bold curves show validation metrics.
        The red dashed line marks T-JEPA's instability point at step 9{,}340.%
    }
    \label{fig:mi_rank}
\end{figure}

\paragraph{Predictability saturates before collapse.}
T-JEPA's validation MI proxy remains below $0.35$ nats throughout training, indicating that the masked target representation is only weakly predictable from the visible context. This behavior matches the mechanism in Consequence~I (App. ~\ref{app:info}): masked text can leave several plausible completions, so the target representation need not concentrate around a single latent point. I-JEPA shows the opposite pattern. Its validation MI increases steadily, consistent with local spatial structure making masked image targets more predictable from nearby context.

The ordering of the two diagnostics is critical. T-JEPA's MI proxy saturates early, while its effective rank is still non-trivial (${\approx}4.66$ at step 2{,}000); collapse occurs only later, near the instability point at step 9{,}340. Thus, low predictability is not an artifact of an already-collapsed representation. Instead, predictability fails first, and representation degeneration follows. This supports the central causal chain of our analysis: lack of conditional concentration weakens the latent prediction signal, creating pressure toward later collapse.

\subsection{Argument II Validation}
\label{sec:arg-II_val}

\paragraph{MSE Loss.}
Figure~\ref{fig:t_jepa_instability} (left) shows a clear divergence between the two models.
I-JEPA's losses decrease smoothly in tandem, reflecting the geometric anti-collapse tension of
Prop.~\ref{prop:visual-collapse-cost}.
T-JEPA's losses co-decrease initially, then diverge catastrophically at step 9{,}340:
validation loss spikes while training loss becomes erratic; a pattern of sustained instability
rather than clean descent, consistent with the degenerate solution of
Prop.~\ref{prop:textcollapse}: starved of a meaningful gradient signal by the
irreducible entropy floor, the optimiser memorises batch-specific noise while
generalisation collapses entirely.

\paragraph{Effective Rank.}
Figure~\ref{fig:t_jepa_instability} (right) reveals the representational
mechanism underlying the loss divergence.
I-JEPA's training and validation ranks grow jointly and converge near
$4.7$-$4.85$, confirming that the encoder learns to spread representations
across multiple directions without collapse.
T-JEPA's trajectory is starkly different: the \emph{validation} rank
stabilises near $4.66$ from step $\sim\!2{,}000$, then undergoes a sharp
upward spike at step $9{,}340$, briefly plateaus, before collapsing
catastrophically to $1.57$, while the \emph{training} rank artificially
inflates to $494.39$-$545.44$.
This train-val divergence directly instantiates
$\lambda_{\min}(\Sigma_z)\!\to\!0$ on the generalisation distribution as
predicted by Prop.~\ref{prop:textcollapse}: the training objective is
simultaneously gamed by overfitting to noise, while the encoder loses all
representational structure on held-out data.
Loss and rank together constitute the collapse signature:
training loss \emph{decreases} precisely \emph{as} validation rank collapses,
confirming the degenerate solution is loss-free.

\begin{figure}[t]
    \centering
    \includegraphics[width=\linewidth]{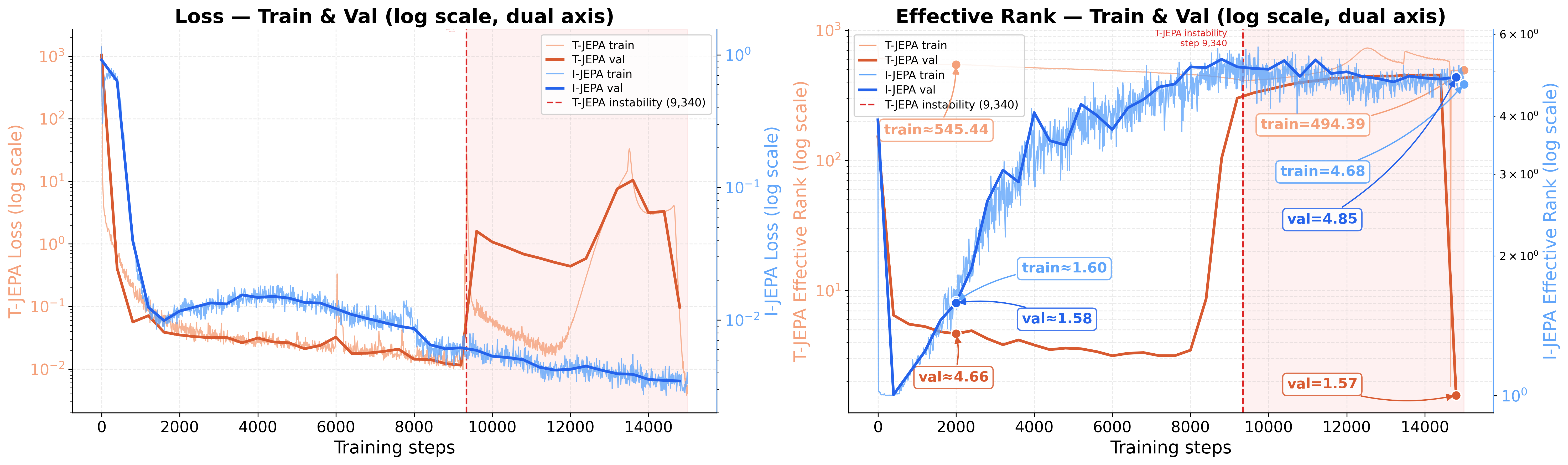}
    \caption{
        Loss and Effective Rank for I-JEPA and T-JEPA
        (training and validation, log scale).
        \emph{Left:} JEPA training and validation loss curves.
        \emph{Right:} Effective rank of $\Sigma_z$.
        Red dashed line marks T-JEPA's instability point (step 9{,}340).%
    }
    \label{fig:t_jepa_instability}
\end{figure}

\subsection{Argument III Validation}
\label{sec:arg-III_val}

I-JEPA's irreducible variance initially falls sharply to 0.0008 (on both train and val splits) near step 1,500 (Figure~\ref{fig:irred_var} ). However, instead of a monotonic decline, it rebounds and plateaus alongside T-JEPA until roughly step 7,000, before decisively dropping again to establish a clear gap. By step 9,340, I-JEPA's variance falls to 0.0062 (train) and 0.0059 (val), eventually stabilising near 0.0017 for both splits - a small residual consistent with $\sigma^2_{\mathrm{texture}}$ (Prop.~\ref{prop:imagevar}). T-JEPA's variance, meanwhile, remains elevated at 0.0149 (train) and 0.0116 (val) at the middle step. This represents roughly double I-JEPA's level at this step, reflecting the persistent irreducible floor imposed by lexical ambiguity (Prop.~\ref{prop:textvar}) that no amount of encoder updates can reduce. After step 9,340, T-JEPA's variance drops sharply, contracting to 0.0010 (train) and 0.0012 (val) between steps 17,500 and 21,000 - not because ambiguity resolves, but because the encoder degenerates and compresses all token embeddings into a narrow region, consistent with the rank collapse reported in Sec.~\ref{sec:arg-II_val}. This clear gap during the stable phase, followed by artificial compression at collapse, provides direct empirical support for Consequence~III (App.~\ref{app:biasvar}). The same qualitative ordering is reproduced across five independent T-JEPA runs: predictability remains limited before effective-rank degeneration, while the late reduction in target variance coincides with directional collapse rather than improved target concentration (App.~\ref{app:multiseed}).

\begin{figure}[t]
    \centering
    \includegraphics[width=\linewidth]{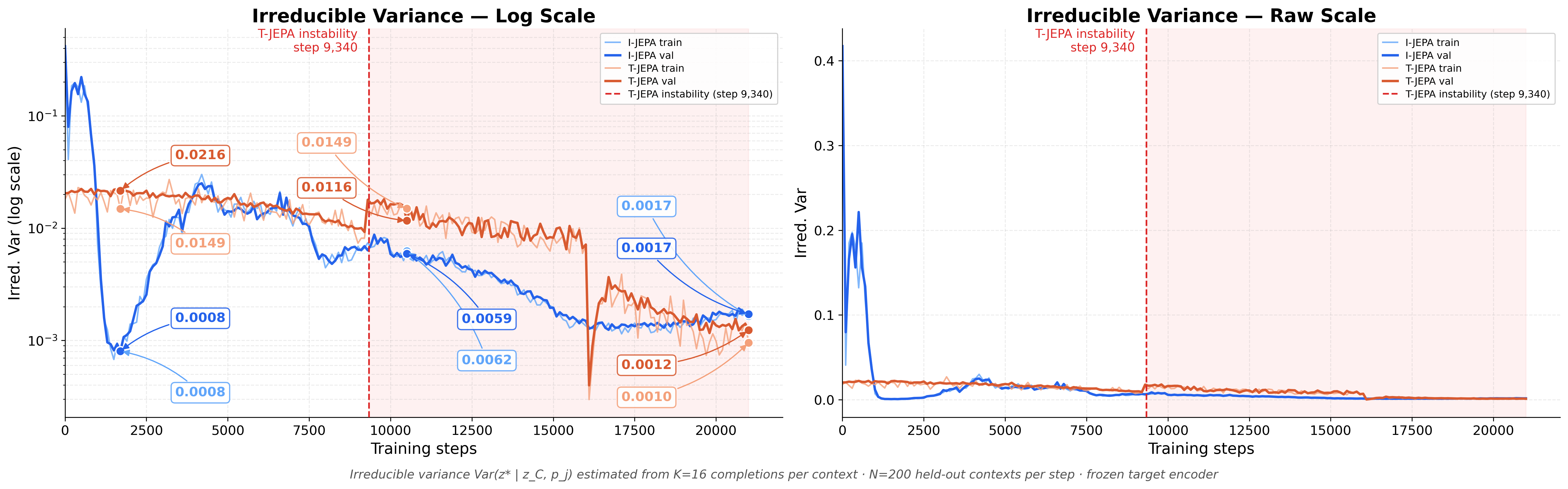}
    \caption{
        Irreducible variance $\widehat{\Var}(z^*\mid z_C, p_j)$ for
        I-JEPA and T-JEPA (log and raw scale).
        T-JEPA's irreducible variance remains persistently higher than
        I-JEPA's throughout training and collapses to near-zero only after
        representational collapse at step ${\sim}15{,}000$.
        The red dashed line marks T-JEPA's instability point (step 9{,}340).%
    }
    \label{fig:irred_var}
\end{figure}

\subsection{Downstream Transfer Across NLP Tasks}
\label{sec:downstream}

\begin{table}[t]
\centering
\caption{
Downstream performance of T-JEPA and self-supervised baselines across classification and retrieval benchmarks.
\textbf{Bold}: best result; \underline{underline}: best among SSL methods.
Retrieval metrics are reported in \%.
$^\text{M}$: mask augmentation; $^\text{R}$: replace augmentation.
Mean $\pm$ std are reported over 5 independent runs.
}
\label{tab:downstream}
\resizebox{\textwidth}{!}{%
\fontsize{8}{9}\selectfont
\setlength{\tabcolsep}{4pt}
\begin{tabular}{lcccccccc}
\toprule
\multirow{2}{*}{\textbf{Model}}
  & \multicolumn{2}{c}{\textbf{IMDB}}
  & \multicolumn{2}{c}{\textbf{SNLI}}
  & \multicolumn{2}{c}{\textbf{MTEB/FEVER}}
  & \multicolumn{2}{c}{\textbf{MTEB/MSMARCO}} \\
\cmidrule(lr){2-3}\cmidrule(lr){4-5}\cmidrule(lr){6-7}\cmidrule(lr){8-9}
  & \textbf{Acc.} & \textbf{F1}
  & \textbf{Acc.} & \textbf{F1}
  & \textbf{nDCG@10} & \textbf{R@100}
  & \textbf{nDCG@10} & \textbf{R@100} \\
\midrule
BERT
& \textbf{81.91} $\pm$ 1.35 & \textbf{81.89} $\pm$ 2.43
& \textbf{59.72} $\pm$ 1.25 & \textbf{59.11} $\pm$ 2.31
& \textbf{14.43} $\pm$ 1.17 & \textbf{38.82} $\pm$ 1.24
& \textbf{9.494} $\pm$ 0.93 & \textbf{33.672} $\pm$ 1.37 \\
\midrule
Barlow Twins$^\text{M}$
& \underline{64.34} $\pm$ 2.26 & 64.99 $\pm$ 1.29
& \underline{45.55} $\pm$ 1.31  & \underline{45.99} $\pm$ 2.59
& 0 $\pm$ 0.00 & 0 $\pm$ 0.00
& \underline{0.1082} $\pm$ 0.009 & \underline{0.419} $\pm$ 0.096 \\
VICReg$^\text{M}$
& 63.12 $\pm$ 2.13 & \underline{68.85} $\pm$ 2.25
& 43.93 $\pm$ 2.27 & 41.76 $\pm$ 1.14
& 0 $\pm$ 0.00 & 0 $\pm$ 0.00
& 0.009 $\pm$ 0.021 & 0.115 $\pm$ 0.035 \\
BYOL$^\text{M}$
& 50.23 $\pm$ 0.72 & 66.63 $\pm$ 1.08
& 34.28 $\pm$ 1.12 & 17.02 $\pm$ 0.99
& 0 $\pm$ 0.00 & 0 $\pm$ 0.00
& 0 $\pm$ 0.00 & 0 $\pm$ 0.00 \\
\midrule
Barlow Twins$^\text{R}$
& 62.02 $\pm$ 1.33 & 63.85 $\pm$ 1.36
& 43.87 $\pm$ 1.18 & 43.10 $\pm$ 2.21
& 0 $\pm$ 0.00 & 0 $\pm$ 0.00
& 0.009 $\pm$ 0.032 & 0.201 $\pm$ 0.042 \\
VICReg$^\text{R}$
& 59.10 $\pm$ 1.22 & 66.56 $\pm$ 2.12
& 41.59 $\pm$ 1.41 & 41.14 $\pm$ 1.31
& 0 $\pm$ 0.00 & 0 $\pm$ 0.00
& 0.004 $\pm$ 0.001 & 0.006 $\pm$ 0.002 \\
BYOL$^\text{R}$
& 61.77 $\pm$ 1.17 & 67.39 $\pm$ 1.24
& 43.99 $\pm$ 2.28 & 44.04 $\pm$ 1.23
& 0 $\pm$ 0.00 & 0 $\pm$ 0.00
& 0.021 $\pm$ 0.004 & 0.358 $\pm$ 0.012 \\
\midrule
T-JEPA
& 50.02 $\pm$ 0.13 & 66.68 $\pm$ 0.41
& 33.34 $\pm$ 0.12 & 16.69 $\pm$ 0.62
& 0 $\pm$ 0.00 & 0 $\pm$ 0.00
& 0 $\pm$ 0.00 & 0 $\pm$ 0.00 \\
\bottomrule
\end{tabular}
}
\end{table}

We next test whether the intrinsic failures observed above translate into poor downstream representations. We compare T-JEPA with BERT and three non-contrastive self-supervised baselines: Barlow Twins, VICReg, and BYOL. All models are pretrained on 3 million English C4 sentences using the \texttt{bert-base-uncased} tokenizer with maximum sequence length 256, we also use this model as backbone for all baselines. After pretraining, each encoder is frozen and used as a fixed feature extractor. We evaluate on two classification benchmarks, IMDB and SNLI~\citep{maas-EtAl:2011:ACL-HLT2011,bowman-etal-2015-large}, and two retrieval benchmarks from MTEB, FEVER and MSMARCO~\citep{muennighoff-etal-2023-mteb}. Full hyperparameters and protocol details are reported in App.~\ref{app:setup}.

Because text lacks the augmentation diversity of images, we evaluate the SSL baselines under two corruption schemes: \emph{Mask}, where both views are independently span-masked at 15--20\% of tokens, and \emph{Replace}, where one view is the original sentence and the other has 15--20\% of tokens replaced via a frozen pretrained BERT proposal distribution (constrained to differ from the originals). T-JEPA uses the same 15--20\% span-masking budget.

Table~\ref{tab:downstream} shows that BERT, trained with MLM, substantially outperforms all self-supervised baselines across classification and retrieval. This is consistent with our hypothesis: distributional token prediction preserves uncertainty over plausible completions, whereas deterministic latent regression compresses that uncertainty into a single target. Among the SSL baselines, Barlow Twins under masking performs best on most classification metrics, suggesting that explicit redundancy reduction is comparatively robust to span corruption. In contrast, T-JEPA reaches chance-level classification and zero retrieval performance, matching the intrinsic collapse diagnostics in Sections~\ref{sec:arg-I_val}--\ref{sec:arg3}.

The contrast between BYOL$^\text{M}$ and BYOL$^\text{R}$ further supports the role of conditional ambiguity. BYOL with independent masking collapses similarly to T-JEPA, whereas BYOL with replacement corruption remains competitive with VICReg and Barlow Twins. Replacement preserves most of the sentence and keeps the two views globally aligned, reducing the need to resolve multiple missing lexical completions. Masking, by contrast, exposes the predictor to the same multi-hypothesis ambiguity that drives centroid degeneracy in T-JEPA. The activation-landscape visualizations in App.~\ref{app:rep_act} provide a qualitative counterpart: BERT retains rich inter-token and intra-vector variation; VICReg, Barlow Twins, and BYOL$^\text{R}$ retain moderate structure; BYOL$^\text{M}$ is flatter; and T-JEPA is nearly uniform and low-amplitude.

\section{Conclusion}
\label{sec:conclusion}

Deterministic JEPA-style latent prediction works when masked targets have a meaningful conditional center. Images often satisfy this through spatial smoothness, whereas masked text admits multiple valid completions, forcing T-JEPA toward either centroid prediction or representational collapse. Our experiments show that limited predictability and elevated variance precede rank degeneration, cosine collapse, and poor transfer. Predictive learning for language must therefore preserve multiple plausible completions; related LLM-JEPA, DLLM-JEPA variants avoid this issue by retaining a generative objective alongside JEPA (App.~\ref{app:discussion-llm-jepa}).

\bibliography{iclr2026_conference}
\bibliographystyle{iclr2026_conference}

\appendix
\clearpage
\section{Appendix Overview}
\label{app:overview}
\setlength{\tabcolsep}{0.5pt}
\begin{table}[H]
\centering
\begin{tabular}{llr}
\toprule
\textbf{Appendix} & \textbf{Contents} & \textbf{Page} \\
\midrule
A & Appendix Overview (this section) & \pageref{app:overview} \\
B & Formal Proofs & \pageref{app:info} \\
\quad B.1 & \quad Proofs of Argument I: The Predictability Bound & \pageref{app:info} \\
\quad B.2 & \quad Proofs of Argument II: Representation Collapse & \pageref{app:collapse} \\
\quad B.3 & \quad Proofs of Argument III: Bias-Variance Decomposition & \pageref{app:biasvar} \\

C & Empirical Metrics: Definitions and Rationale & \pageref{app:metrics} \\

D & Multi-Seed Robustness Of The Collapse Diagnostics & \pageref{app:multiseed}\\

E & Argument II's Supplementary Metric: Pairwise Cosine Similarity & \pageref{app:pairwise} \\

F & Experimental Details for Section~\ref{sec:downstream} & \pageref{app:setup}
\\

G & Visualization: The Phenomenon of Centroid Degeneration & \pageref{app:centroid}\\ 

H & Visualization:  Representation Activation Heatmaps Across Self-Supervised Models & \pageref{app:rep_act} \\
I & Visualization: Patterns of T-JEPA's Representation Collapse &\pageref{app:vis_collapse} \\
J & Discussion: JEPA on Top of LLMs versus JEPA Directly on Text & \pageref{app:discussion-llm-jepa} \\
K & Discussion: SIGReg Prevents Collapse But Does Not Resolve Predictability or  &  \\ 
 & Centroid Degeneracy & \pageref{app:sigreg} 
\\
L & Mathematical Notations used throughout the paper and appendices & \pageref{app:notation} \\
\bottomrule
\end{tabular}
\caption{Structure of the appendix.}
\label{tab:appendix_overview}
\end{table}

\section{Formal Proofs}

\subsection{Proofs of Argument I: The Predictability Bound}
\label{app:info}

We first formalize the sense in which JEPA requires the masked target representation to be predictable from the visible context. Let
\begin{equation}
    z_C = f_\theta(x_C)
\end{equation}
and
\begin{equation}
    z_T^{(j)} = f_{\bar\theta}(x)^{(j)}
\end{equation}
denote the context representation and the target representation at position $j$. Since the predictor receives only $z_C$ and a positional query $p_j$, the JEPA loss can be small only when $z_T^{(j)}$ is nearly determined by $(z_C,p_j)$. Equivalently, the conditional uncertainty
\begin{equation}
    H\!\left(z_T^{(j)} \mid z_C,p_j\right)
\end{equation}
must be small relative to the target uncertainty at that position. By the chain rule of mutual information,
\begin{equation}
  I\!\left(z_C;\, z_T^{(j)} \mid p_j\right)
  =
  H\!\left(z_T^{(j)} \mid p_j\right)
  -
  H\!\left(z_T^{(j)} \mid z_C,p_j\right).
  \label{eq:mi}
\end{equation}
Thus, useful deterministic latent prediction requires the context representation to remove a substantial fraction of the target uncertainty.

\begin{lemma}[Data-processing ceiling]
\label{lem:dpi}
For deterministic encoders $f_\theta$ and $f_{\bar\theta}$,
\begin{equation}
    I\!\left(z_C;\, z_T^{(j)}\right)
    \leq
    I\!\left(x_C;\, x\right).
    \label{eq:dpi-global}
\end{equation}
If the target representation at position $j$ is local, in the sense that $z_T^{(j)}$ is a deterministic function of $x^{(j)}$, then
\begin{equation}
    I\!\left(z_C;\, z_T^{(j)}\right)
    \leq
    I\!\left(x_C;\, x^{(j)}\right).
    \label{eq:dpi-local}
\end{equation}
\end{lemma}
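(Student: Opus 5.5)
The plan is to apply the data-processing inequality twice, once on each side of the mutual information. Since both encoders are deterministic, $z_C$ is a measurable function of $x_C$ and $z_T^{(j)}$ is a measurable function of $x$. The first step treats the context side. Conditional on $x_C$, the variable $z_C=f_\theta(x_C)$ is fixed, so $z_T^{(j)} \to x_C \to z_C$ is a Markov chain and
\begin{equation*}
  I\!\left(z_C;\, z_T^{(j)}\right) \leq I\!\left(x_C;\, z_T^{(j)}\right).
\end{equation*}

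The second step treats the target side. Since $z_T^{(j)}=f_{\bar\theta}(x)^{(j)}$, the chain $x_C \to x \to z_T^{(j)}$ holds, which gives
\begin{equation*}
  I\!\left(x_C;\, z_T^{(j)}\right) \leq I\!\left(x_C;\, x\right).
\end{equation*}
Chaining the two inequalities yields \eqref{eq:dpi-global}. We can also note that $x_C$ is itself a function of $x$, so $I(x_C;x)=H(x_C)$ in the discrete case. This is why the global bound is loose and why the local refinement is the informative statement.

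For \eqref{eq:dpi-local}, we assume $z_T^{(j)}=\psi(x^{(j)})$ for a deterministic $\psi$. The second step is then replaced by the chain $x_C \to x^{(j)} \to z_T^{(j)}$, which gives $I(x_C; z_T^{(j)}) \le I(x_C; x^{(j)})$. Combined with the first step, this proves the claim. One caveat worth noting is that $j\in T$ and $T\cap C=\emptyset$, so $x^{(j)}$ is not a component of $x_C$. The right-hand side is therefore a genuine measure of cross-unit dependence rather than being trivially $H(x^{(j)})$.

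The main obstacle is measure-theoretic rather than conceptual. The representations are continuous vectors in $\mathbb{R}^d$, so mutual information has to be understood in the general sense, as the supremum over finite partitions or as the KL divergence between the joint law and the product of marginals. I would use that definition explicitly: the data-processing inequality holds for arbitrary measurable maps under it, so no density assumptions are needed. I would also note that the random masking $(C,T)$ and the position query can be handled by conditioning on them throughout, because the encoder parameters $\theta,\bar\theta$ are fixed at the time the bound is evaluated. With that conditioning, every Markov chain above is exact.
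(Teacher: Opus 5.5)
Your proof is correct and is essentially the paper's argument: both apply the data-processing inequality to the deterministic post-processings $x_C\mapsto z_C$ and $x\mapsto z_T^{(j)}$ (or $x^{(j)}\mapsto z_T^{(j)}$ in the local case). You make the two one-sided Markov-chain steps explicit, whereas the paper invokes the two-sided form $I(f(X);g(Y))\le I(X;Y)$ in a single step; your side remarks (that $I(x_C;x)=H(x_C)$ makes the global bound loose, and the measure-theoretic definition of mutual information) are correct but not needed for the paper's argument.
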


\begin{proof}
Because $z_C=f_\theta(x_C)$ is a deterministic function of $x_C$ and
$z_T^{(j)}=f_{\bar\theta}(x)^{(j)}$ is a deterministic function of the full input $x$, the pair
$(z_C,z_T^{(j)})$ is obtained from $(x_C,x)$ by deterministic post-processing. The Data Processing Inequality therefore gives Eq.~\ref{eq:dpi-global}: deterministic encoders cannot increase the mutual information available between the raw context and the raw input.

Under the additional locality assumption, there exists a deterministic map $h_{\bar\theta,j}$ such that
\begin{equation}
    z_T^{(j)} = h_{\bar\theta,j}\!\left(x^{(j)}\right).
\end{equation}
Hence $z_C$ is a deterministic function of $x_C$, while $z_T^{(j)}$ is a deterministic function of $x^{(j)}$. Applying the Data Processing Inequality to these two post-processings gives
\begin{equation}
    I\!\left(f_\theta(x_C);\, h_{\bar\theta,j}(x^{(j)})\right)
    \leq
    I\!\left(x_C;\, x^{(j)}\right),
\end{equation}
which is Eq.~\ref{eq:dpi-local}.
\end{proof}

\begin{remark}
Lemma~\ref{lem:dpi} gives an information ceiling, not a collapse result. It says that the latent representations cannot contain more predictive information than is already present in the raw context-target relationship. The image-text difference therefore cannot be resolved by encoder capacity alone: it depends on how strongly the visible context constrains the missing target.
\end{remark}

\subsubsection{Images: local geometry narrows the target}

Natural images exhibit local spatial regularity: nearby pixels and patches tend to be correlated, and neighboring patches often belong to the same object, boundary, or texture region~\citep{Feige2015Images}. We use this regularity to formalize the sense in which visible image context can geometrically restrict a masked target.

\begin{proposition}[Spatial concentration for images]
\label{prop:lip}
Let $x$ be a natural image divided into patches on a two-dimensional grid, and let $d(i,j)$ denote the distance between patch centers. Assume that the image distribution has bounded local variation and that the target representation is locally Lipschitz with respect to patch-level perturbations on the data manifold. Then there exists a constant $L>0$ such that, for nearby patches $i$ and $j$,
\begin{equation}
    \mathbb{E}\!\left[
    \left\|
        f_{\bar\theta}(x)^{(j)}
        -
        f_{\bar\theta}(x)^{(i)}
    \right\|_2^2
    \right]
    \leq
    L^2 d(i,j)^2 .
    \label{eq:lip}
\end{equation}
\end{proposition}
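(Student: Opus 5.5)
The plan is to make the two informal hypotheses precise and then chain them. The inequality will follow from applying the Lipschitz property pointwise and then taking expectations.

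First I will fix what ``bounded local variation'' means. For a given image $x$, let $x^{(i\to j)}$ be the patch-level perturbation that moves the content at patch $j$ toward patch $i$, equivalently a translate of $x$ along the grid by the displacement $j-i$. Bounded local variation will then be stated as a patch-space bound
\begin{equation}
\mathbb{E}\bigl[\|x^{(j)}-x^{(i)}\|^2\bigr]\le V\, d(i,j)^2 \qquad \text{for nearby } i,j .
\end{equation}
This is a mean-square smoothness assumption in the spirit of the natural-image regularity cited from \citet{Feige2015Images}.

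Second, I will state the locally Lipschitz assumption on the target encoder as follows. On the data manifold, within the neighbourhood defined by ``nearby'', there is a constant $L_f$ such that
\begin{equation}
\|f_{\bar\theta}(x)^{(j)}-f_{\bar\theta}(x)^{(i)}\|_2\le L_f\,\|x^{(j)}-x^{(i)}\|,
\end{equation}
up to an additive term coming from the surrounding context. Because $f_{\bar\theta}$ is applied to the full image with shared weights and positional embeddings, the cleanest formulation views the token at $j$ as the token at $i$ evaluated on the translated image. Under this reading, Lipschitzness in the input transfers to a bound on the feature difference.

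Third, I will square this pointwise inequality, take expectations over the image distribution, and combine it with the variation bound. This gives the claim with $L^2=L_f^2 V$. If the positional embedding contributes a separate term that is Lipschitz in position with constant $L_p$, I will use $(a+b)^2\le 2a^2+2b^2$ and absorb it by setting $L^2=2(L_f^2V+L_p^2)$. Existence of $L>0$ is all the statement asks for, so these constants need not be optimized.

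The main obstacle is the second step. The encoder is global: $f_{\bar\theta}(x)^{(j)}$ depends on all patches, not only on $x^{(j)}$. So ``Lipschitz with respect to patch-level perturbations'' has to be read as a modeling assumption that bounds the change in the position-$j$ output when the relevant local content differs by a given amount. I will adopt the translation or perturbation formulation explicitly as the assumption, restricted to the data manifold. That restriction keeps the argument from requiring a global Lipschitz constant for a ViT, which would be unrealistically large. With this assumption in place, the rest is a one-line expectation argument.
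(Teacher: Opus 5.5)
Your proposal is correct and matches the paper's proof: the paper also reads bounded local variation as $\mathbb{E}\|x^{(j)}-x^{(i)}\|_2^2\le G^2 d(i,j)^2$, takes the pointwise bound $\|f_{\bar\theta}(x)^{(j)}-f_{\bar\theta}(x)^{(i)}\|_2\le L_f\|x^{(j)}-x^{(i)}\|_2$ as the meaning of local Lipschitzness, then squares and takes expectations to get $L=L_fG$ (your $L^2=L_f^2V$). Your extra positional term, absorbed via $(a+b)^2\le 2a^2+2b^2$, is a harmless generalization, but only if every additive ``context'' term you allow is also bounded by a constant multiple of $d(i,j)$; otherwise the squared bound does not close.
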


\begin{proof}
Bounded local variation means that nearby image patches have bounded expected discrepancy:
\begin{equation}
    \mathbb{E}\!\left[
    \left\|x^{(j)}-x^{(i)}\right\|_2^2
    \right]
    \leq
    G^2 d(i,j)^2
\end{equation}
for some constant $G>0$. By local Lipschitzness of the target representation on the data manifold, there exists $L_f>0$ such that
\begin{equation}
    \left\|
        f_{\bar\theta}(x)^{(j)}
        -
        f_{\bar\theta}(x)^{(i)}
    \right\|_2
    \leq
    L_f
    \left\|
        x^{(j)}-x^{(i)}
    \right\|_2 .
\end{equation}
Squaring both sides, taking expectation, and setting $L=L_fG$ gives Eq.~\ref{eq:lip}.
\end{proof}

\begin{remark}
Proposition~\ref{prop:lip} does not claim that images are deterministic or that every masked patch has a unique completion. It states that local visual context restricts the range of plausible target representations. When visible patches lie near the masked region, the conditional distribution of $z_T^{(j)}$ given $(z_C,p_j)$ is expected to be concentrated up to residual texture-level uncertainty. This is the regime in which squared-error latent prediction is well aligned with the data.
\end{remark}

\subsubsection{Text: lexical alternatives disperse the target}

For text, the target unit $x^{(j)}$ is a discrete token from a vocabulary $\mathcal{V}$. Even when the surrounding context is informative, several continuations may remain valid. This creates a prediction geometry different from images: uncertainty is not merely local noise around one value, but a set of plausible lexical alternatives.

\paragraph{Empirical lexical uncertainty.}
For a masked token $x^{(j)}$ and context $x_C$, the conditional uncertainty is
\begin{equation}
    H\!\left(x^{(j)} \mid x_C\right)
    =
    -\sum_{v\in\mathcal{V}}
    p(v\mid x_C)\log_2 p(v\mid x_C).
    \label{eq:textent}
\end{equation}
A conditional perplexity of $8$-$32$ corresponds to $3$-$5$ bits of uncertainty per token, and classical estimates of English entropy support the broader claim that natural language retains uncertainty even under strong contextual constraints~\citep{Shannon1951Prediction}. The precise value depends on tokenization, domain, and context length; the important point is that the conditional distribution is often not concentrated on a single continuation.

To connect lexical uncertainty to latent-space uncertainty, we require the encoder to preserve distinctions among plausible completions. Without such a condition, an encoder could trivially make the prediction problem easy by mapping distinct tokens to the same vector, which is precisely the collapse mechanism analyzed below.

\begin{assumption}[Target separation]
\label{assump:target-separation}
For a context $x_C$, define the set of plausible continuations
\begin{equation}
    \mathcal{V}_\epsilon(x_C)
    =
    \{v \in \mathcal{V}: p(v\mid x_C)>\epsilon\}.
\end{equation}
The target encoder preserves distinctions among plausible continuations if there exists $\Delta>0$ such that, for all distinct $u,v\in\mathcal{V}_\epsilon(x_C)$,
\begin{equation}
    \left\|
        f_{\bar\theta}(x_{j\leftarrow u})^{(j)}
        -
        f_{\bar\theta}(x_{j\leftarrow v})^{(j)}
    \right\|_2
    \geq
    \Delta,
    \label{eq:target-separation}
\end{equation}
where $x_{j\leftarrow v}$ denotes the sequence obtained by placing token $v$ at position $j$ while keeping the context fixed.
\end{assumption}

\begin{proposition}[Non-concentration of separated text targets]
\label{prop:text-nonconcentration}
Fix a context $x_C$ and suppose there exist at least two plausible continuations $u,v\in\mathcal{V}_\epsilon(x_C)$ whose target representations satisfy Assumption~\ref{assump:target-separation}. Then the conditional latent target distribution cannot have zero variance around a single point. In particular, any deterministic squared-error predictor must either incur nonzero conditional variance or rely on an encoder that reduces the separation between plausible continuations.
\end{proposition}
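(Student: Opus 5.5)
We reduce the statement to an elementary variance lower bound for a random vector that takes two $\Delta$-separated values with probabilities at least $\epsilon$, and then read off the dichotomy.

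\textbf{Setup.} Fix $x_C$ and position $j$, and let $Z=f_{\bar\theta}(x)^{(j)}$ with $x^{(j)}\sim p(\cdot\mid x_C)$. Because $x_C$ is held fixed, conditioning on $(z_C,p_j)$ is no finer than conditioning on $x_C$, so $Z$ has the stated law. Write $a=f_{\bar\theta}(x_{j\leftarrow u})^{(j)}$ and $b=f_{\bar\theta}(x_{j\leftarrow v})^{(j)}$. By Assumption~\ref{assump:target-separation}, $\|a-b\|_2\geq\Delta$, and $\Pr(Z=a)\geq p(u\mid x_C)>\epsilon$, with the same bound for $b$.

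\textbf{Key bound.} For any point $c\in\mathbb{R}^d$,
\begin{equation*}
\mathbb{E}\|Z-c\|_2^2\;\geq\;\epsilon\left(\|a-c\|_2^2+\|b-c\|_2^2\right)\;\geq\;\frac{\epsilon}{2}\|a-b\|_2^2\;\geq\;\frac{\epsilon\Delta^2}{2}.
\end{equation*}
The middle step uses $\|a-b\|^2\le 2\|a-c\|^2+2\|b-c\|^2$. Taking $c=\mathbb{E}[Z\mid z_C,p_j]$ gives
\begin{equation*}
\operatorname{tr}\operatorname{Var}(Z\mid z_C,p_j)\geq \epsilon\Delta^2/2>0,
\end{equation*}
so no single point carries all the mass and the conditional variance is not zero. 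A sharper constant such as $2\epsilon^2\Delta^2$ is possible, but it is not needed.

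\textbf{Dichotomy.} Apply the decomposition of Eq.~\ref{eq:bias-variance-main}. The loss of any deterministic predictor $g_\phi(z_C,p_j)$ is at least the irreducible term, which is bounded below by $\epsilon\Delta^2/2$. That bound holds for every predictor, including the conditional-mean optimum of Eq.~\ref{eq:conditional-mean-predictor}. Now take the contrapositive. If the loss, or the variance, is driven below $\epsilon\Delta^2/2$, then Assumption~\ref{assump:target-separation} must fail for some plausible pair. That means the encoder has reduced the separation among plausible continuations, which gives the second alternative.

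\textbf{Main obstacle.} The care lies in the conditioning step. The predictor sees $z_C$, not $x_C$, so the law of $Z$ given $(z_C,p_j)$ is a mixture over the contexts $x_C$ mapped to that $z_C$. Mixing can only increase the variance, by the law of total variance. The lower bound therefore persists if Assumption~\ref{assump:target-separation} holds for each context in the mixture, and we state the result under that reading.
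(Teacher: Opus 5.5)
Your proof is correct, and its core logic matches the paper's. Two $\Delta$-separated atoms, each carrying mass above $\epsilon$, cannot form a point mass, so the conditional variance is positive. The only way to remove it is for the encoder to shrink the separation.

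The paper leaves this qualitative: the target ``places mass on at least two separated points,'' hence the residual variance is nonzero. It also passes from conditioning on $x_C$ to conditioning on $(z_C,p_j)$ without comment. You add two things.
\begin{enumerate}
\item The bound $\mathbb{E}\|Z-c\|_2^2\ge\tfrac{\epsilon}{2}\|a-b\|_2^2$ for every point $c$. It lower-bounds every deterministic predictor directly, with no appeal to conditional-mean optimality. It also makes the dichotomy quantitative: since the bound holds for every plausible pair, a loss below $\epsilon\Delta^2/2$ forces all plausible pairs, not merely some, to lie closer than $\Delta$.
\item The law-of-total-variance step $\operatorname{Var}(Z\mid z_C,p_j)\ge\mathbb{E}[\operatorname{Var}(Z\mid x_C)\mid z_C,p_j]$. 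This is what legitimately transfers a fixed-context bound to the predictor's information set when $f_\theta$ is not injective.
\end{enumerate}
Your Setup sentence (``so $Z$ has the stated law'') asserts exactly what fails in that non-injective case. Your final paragraph is the correct treatment and should replace it.

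Two small fixes are needed.

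First, your aside that $2\epsilon^2\Delta^2$ is a sharper valid constant is false. With $p(u\mid x_C)=p(v\mid x_C)=\tfrac12$ the variance is $\Delta^2/4$, which falls below $2\epsilon^2\Delta^2$ once $\epsilon>1/(2\sqrt2)$. The pairwise identity $\operatorname{Var}(Z)=\tfrac12\mathbb{E}\|Z-Z'\|_2^2$, used by the paper in Prop.~\ref{prop:textvar-lower}, yields only $\epsilon^2\Delta^2$. That is weaker than your bound, since $\epsilon<\tfrac12$ is forced by $p(u\mid x_C)+p(v\mid x_C)\le1$. Your constant $\epsilon\Delta^2/2$ is in fact optimal under these hypotheses. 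Take $\|a-b\|_2=\Delta$, put mass $p$ on each of $a$ and $b$, and place the remaining $1-2p$ at their midpoint, split among tokens of probability at most $\epsilon$ so that separation is not violated. The variance is then $p\Delta^2/2$, which tends to $\epsilon\Delta^2/2$ as $p\downarrow\epsilon$.

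Second, to get strict positivity without assuming $\epsilon>0$, use $\min\{p(u\mid x_C),p(v\mid x_C)\}$ in place of $\epsilon$.
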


\begin{proof}
For a fixed context, the latent target is the random variable induced by
\begin{equation}
    v \sim p(\cdot\mid x_C),
    \qquad
    z_T^{(j)} = f_{\bar\theta}(x_{j\leftarrow v})^{(j)} .
\end{equation}
If two plausible continuations have non-negligible probability and their target representations are separated by at least $\Delta$, then $z_T^{(j)}$ places mass on at least two separated points. Therefore it cannot be concentrated at a single vector. Its expected squared error around any deterministic prediction is minimized by the conditional mean,
\begin{equation}
    \mathbb{E}\!\left[z_T^{(j)}\mid z_C,p_j\right],
\end{equation}
but the residual conditional variance is nonzero whenever separated alternatives retain nonzero mass. The only way for this variance to vanish is for the encoder to map the plausible continuations to the same, or nearly the same, latent vector, which violates the separation condition and removes the distinctions the representation was meant to preserve.
\end{proof}

\begin{mainresult}[title=Consequence I: Predictability Requires Concentration]
For images, local spatial structure restricts the masked target to a low-variance set of plausible representations. For token-level text, the same context can leave several valid lexical alternatives. If the encoder preserves these alternatives, the latent target remains non-concentrated; if it removes them, the representation loses distinctions among valid completions. Deterministic JEPA prediction on text therefore faces a trade-off between irreducible prediction variance and loss-reducing collapse.
\end{mainresult}

\subsection{Detailed Proofs for Argument II: Representation Collapse}
\label{app:collapse}

\subsubsection{The collapse problem}

The JEPA objective in Eq.~\ref{eq:jepa} contains no explicit constraint requiring the representation distribution to remain full-rank. Stop-gradient and EMA stabilize the target branch, but they do not by themselves rule out degenerate representations. The most extreme solution is a constant representation where
\begin{equation}
  f_\theta(x_C)=\mathbf{c},
  \qquad
  f_{\bar\theta}(x)^{(j)}=\mathbf{c},
  \qquad
  g_\phi(\mathbf{c},p_j)=\mathbf{c},
\end{equation}
for all contexts $x_C$, inputs $x$, and target positions $j$, where $\mathbf{c}\in\mathbb{R}^d$ is fixed. This solution achieves zero JEPA loss while encoding no information about the input. More generally, partial collapse occurs when the encoder maps inputs to a low-dimensional subset of the representation space.

\begin{definition}[Representation covariance and collapse]
\label{def:cov}
Let $z=f_\theta(x_C)$, where $x_C$ is drawn from the data distribution. The representation covariance is
\begin{equation}
    \Sigma_z
    =
    \mathbb{E}\!\left[
        (z-\mathbb{E}[z])(z-\mathbb{E}[z])^\top
    \right]
    \in \mathbb{R}^{d\times d}.
\end{equation}
We say that the representation collapses along a direction $u\in\mathbb{R}^d$ if
\begin{equation}
    \operatorname{Var}(u^\top z)
    =
    u^\top\Sigma_z u
    \to 0 .
\end{equation}
Full collapse corresponds to $\Sigma_z=0$. Dimensional collapse corresponds to variance vanishing along one or more directions, equivalently to a loss of effective rank in $\Sigma_z$.
\end{definition}

\subsubsection{A sufficient condition for non-collapse}

\begin{lemma}[Non-collapse via directional spread]
\label{lem:noncollapse}
If there exists $\gamma>0$ such that for every unit vector $u\in\mathbb{R}^d$,
\begin{equation}
    \operatorname{Var}(u^\top z)
    =
    u^\top\Sigma_z u
    \geq
    \gamma,
    \label{eq:spread}
\end{equation}
then $\lambda_{\min}(\Sigma_z)\geq\gamma$, and the representation is non-collapsed.
\end{lemma}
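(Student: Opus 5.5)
The argument is the Rayleigh--Ritz (Courant--Fischer) characterization of the smallest eigenvalue of a symmetric matrix. First, $\Sigma_z$ as in Definition~\ref{def:cov} is real, symmetric and positive semidefinite. Symmetry is clear from its outer-product form. Positive semidefiniteness follows from
\begin{equation*}
u^\top\Sigma_z u=\mathbb{E}\!\left[(u^\top(z-\mathbb{E}[z]))^2\right]\geq 0,
\end{equation*}
which also gives the identity $u^\top\Sigma_z u=\operatorname{Var}(u^\top z)$ already used in Eq.~\ref{eq:spread}. By the spectral theorem, $\Sigma_z=Q\Lambda Q^\top$ with $Q$ orthogonal and real eigenvalues $\lambda_1\geq\cdots\geq\lambda_d=\lambda_{\min}(\Sigma_z)\geq 0$.

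Next, take $u=q_d$, the unit eigenvector associated with $\lambda_{\min}$. Then $q_d^\top\Sigma_z q_d=\lambda_{\min}(\Sigma_z)$. Since the hypothesis Eq.~\ref{eq:spread} holds for every unit vector, it holds in particular for $q_d$, so $\lambda_{\min}(\Sigma_z)\geq\gamma$. Equivalently, one can invoke $\lambda_{\min}(\Sigma_z)=\min_{\|u\|_2=1}u^\top\Sigma_z u$, which is attained because the unit sphere is compact and the quadratic form is continuous. Either route needs only one specialization of the hypothesis, so there is no real analytic obstacle.

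Finally, I would connect this to the notion of non-collapse in Definition~\ref{def:cov}. Since $\lambda_{\min}(\Sigma_z)\geq\gamma>0$, all eigenvalues are at least $\gamma$. Hence $\Sigma_z\neq 0$, which excludes full collapse. Also $\Sigma_z$ has full rank $d$, and no direction $u$ has $u^\top\Sigma_z u\to 0$, because every unit direction keeps variance at least $\gamma$; this excludes dimensional collapse. It also gives a quantitative consequence for the effective rank in Eq.~\ref{eq:erank-main}: each normalized eigenvalue satisfies $p_i\geq\gamma/\operatorname{tr}(\Sigma_z)$, so the spectrum stays bounded away from degenerate.

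The only point needing care is interpretive. \emph{Collapse} in Definition~\ref{def:cov} is phrased as a limit, $u^\top\Sigma_z u\to 0$, along a training trajectory. I would therefore say explicitly that if Eq.~\ref{eq:spread} holds uniformly in time with the same $\gamma$, then no such limit can occur along any direction.
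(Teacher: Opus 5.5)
Your proof is correct and follows the same route as the paper: it invokes the variational characterization $\lambda_{\min}(\Sigma_z)=\min_{\|u\|_2=1}u^\top\Sigma_z u$ and specializes the hypothesis. Your additional remarks (positive semidefiniteness, evaluating at the eigenvector $q_d$, and the bound $p_i\geq\gamma/\operatorname{tr}(\Sigma_z)$ on the normalized spectrum) are valid elaborations of that argument, not a different one.
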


\begin{proof}
By the variational characterization of eigenvalues,
\[
    \lambda_{\min}(\Sigma_z)
    =
    \min_{\|u\|_2=1} u^\top\Sigma_z u .
\]
If Eq.~\ref{eq:spread} holds for every unit vector $u$, then this minimum is at least $\gamma$. Conversely, if there exists a unit vector $u$ such that $\operatorname{Var}(u^\top z)=0$, then $u^\top z$ is constant almost surely, so the representation carries no variation along that direction.
\end{proof}

The question is therefore whether the prediction objective supplies pressure for directional spread. We argue that local image prediction provides such pressure through content-dependent targets, whereas token-level text admits a collapse route because many distinct continuations can be made equivalent under the latent loss.

\subsubsection{Images: content-dependent prediction discourages collapse}

Natural images contain spatially local structure, but they are not determined by position alone. A patch at a fixed location can contain different objects, textures, colors, and boundaries across images. If the context encoder collapses, the predictor receives no image-dependent information from $z_C$ and is left only with the positional query $p_j$. Such a predictor cannot explain content-dependent variation in the target representation.

\begin{assumption}[Non-degenerate visual targets]
\label{assump:visual-target-variance}
For each target position $j$, the target representation has nonzero variation beyond position:
\begin{equation}
    \mathbb{E}
    \left[
    \left\|
        z_T^{(j)}
        -
        \mathbb{E}\!\left[z_T^{(j)}\mid p_j\right]
    \right\|_2^2
    \right]
    \geq
    \sigma_{\mathrm{vis}}^2
\end{equation}
for some $\sigma_{\mathrm{vis}}^2>0$.
\end{assumption}

\begin{proposition}[Collapsed image context incurs prediction error]
\label{prop:visual-collapse-cost}
Suppose Assumption~\ref{assump:visual-target-variance} holds. If the context encoder collapses to a constant representation $z_C=\mathbf{c}$, then any JEPA predictor incurs loss at least $\sigma_{\mathrm{vis}}^2$:
\begin{equation}
    \inf_{g_\phi}
    \mathbb{E}
    \left[
    \left\|
        g_\phi(\mathbf{c},p_j)
        -
        z_T^{(j)}
    \right\|_2^2
    \right]
    \geq
    \sigma_{\mathrm{vis}}^2 .
\end{equation}
\end{proposition}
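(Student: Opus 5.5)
Once the context encoder is constant, the predictor $g_\phi(\mathbf{c},\cdot)$ is just some measurable function of the positional query $p_j$. The statement then reduces to the fact that squared-error prediction from $p_j$ alone cannot beat the conditional mean given $p_j$, which Assumption~\ref{assump:visual-target-variance} controls directly. I will write $h(p_j) := g_\phi(\mathbf{c},p_j)$ and note that $h$ ranges over (a subset of) all measurable functions of $p_j$. It therefore suffices to lower-bound $\inf_h \mathbb{E}\|h(p_j)-z_T^{(j)}\|_2^2$ over all measurable $h$.

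The key step is the bias--variance decomposition of Eq.~\ref{eq:bias-variance-main}, now conditioning on $p_j$ alone instead of $(z_C,p_j)$. Let $m(p_j)=\mathbb{E}[z_T^{(j)}\mid p_j]$. I expand
\[
\|h(p_j)-z_T^{(j)}\|_2^2=\|h(p_j)-m(p_j)\|_2^2+\|m(p_j)-z_T^{(j)}\|_2^2+2\langle h(p_j)-m(p_j),\,m(p_j)-z_T^{(j)}\rangle .
\]
Taking expectations and using the tower property, the cross term vanishes. The reason is that $h(p_j)-m(p_j)$ is $p_j$-measurable, while $\mathbb{E}[m(p_j)-z_T^{(j)}\mid p_j]=0$. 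This gives
\[
\mathbb{E}\|h(p_j)-z_T^{(j)}\|_2^2=\mathbb{E}\|h(p_j)-m(p_j)\|_2^2+\mathbb{E}\|z_T^{(j)}-m(p_j)\|_2^2\ \ge\ \sigma_{\mathrm{vis}}^2 ,
\]
where the last inequality drops the nonnegative first term and applies Assumption~\ref{assump:visual-target-variance}. Taking the infimum over $h$, and hence over $g_\phi$, gives the claim.

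The JEPA loss in Eq.~\ref{eq:jepa} also averages over $j\in T$ and over masks. Since the bound holds for each position $j$, it holds for the average over $j$ as well, because an average of quantities each at least $\sigma_{\mathrm{vis}}^2$ is itself at least $\sigma_{\mathrm{vis}}^2$.

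The main technical point is justifying the conditional-expectation manipulations. I will assume $\mathbb{E}\|z_T^{(j)}\|_2^2<\infty$ so that $m$ exists and the cross term is integrable. If some $h$ has $\mathbb{E}\|h(p_j)\|_2^2=\infty$, its loss is infinite and the bound holds trivially. I also need $z_T^{(j)}$ to be fixed, that is, not co-collapsed. The statement collapses only the context encoder, and the target encoder's variation is exactly what the assumption encodes, so this holds as stated.
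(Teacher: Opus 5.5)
Your proof is correct and follows the paper's argument: once $z_C=\mathbf{c}$, the predictor depends on $p_j$ alone, so its loss is bounded below by the conditional variance $\mathbb{E}\|z_T^{(j)}-\mathbb{E}[z_T^{(j)}\mid p_j]\|_2^2$, which Assumption~\ref{assump:visual-target-variance} bounds by $\sigma_{\mathrm{vis}}^2$. You additionally spell out the cross-term (orthogonality) argument and the integrability conditions, which the paper compresses into the single claim that the conditional mean is the squared-error optimal predictor.
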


\begin{proof}
If $z_C=\mathbf{c}$ for all inputs, then the predictor can depend only on the positional query $p_j$. Under squared-error loss, the optimal predictor is the conditional mean
\begin{equation}
    g_\phi^\star(\mathbf{c},p_j)
    =
    \mathbb{E}\!\left[z_T^{(j)}\mid p_j\right].
\end{equation}
The minimum achievable loss is therefore
\begin{equation}
    \mathbb{E}
    \left[
    \left\|
        z_T^{(j)}
        -
        \mathbb{E}\!\left[z_T^{(j)}\mid p_j\right]
    \right\|_2^2
    \right],
\end{equation}
which is at least $\sigma_{\mathrm{vis}}^2$ by Assumption~\ref{assump:visual-target-variance}.
\end{proof}

\begin{remark}
Proposition~\ref{prop:visual-collapse-cost} does not claim that image JEPA is immune to collapse in all architectures or optimization regimes. It states that, when target features vary with image content, a collapsed context representation cannot match the prediction loss achievable by a content-aware representation. Spatial locality strengthens this anti-collapse pressure: neighboring visible patches reduce uncertainty about the target, so encoding visual content in $z_C$ can lower the loss beyond what position alone permits.
\end{remark}

\subsubsection{Text: collapse can reduce ambiguity}

For token-level text, the failure mode is different. A masked context often admits several valid continuations. If the target encoder preserves these distinctions, the predictor faces a non-concentrated target distribution. If the encoder removes them, the target distribution becomes easier to predict, but the representation loses information. The JEPA loss itself does not prevent this second route.

Consider the sentences
\begin{center}
    ``The cat sat on the mat.''
\end{center}
and
\begin{center}
    ``The dog lay on the rug.''
\end{center}
They differ lexically, but share a coarse semantic pattern. A representation that preserves lexical identity must separate them; a representation that keeps only coarse semantics may map them nearby or even identically. The problem is not that such invariance is always bad. The problem is that a deterministic squared-error objective has no intrinsic mechanism for deciding which distinctions should be preserved.

\begin{proposition}[Low-rank text encodings can minimize JEPA loss]
\label{prop:textcollapse}
Let $\pi:\mathcal{X}\to\{1,\ldots,K\}$ be a coarse partition of text inputs into equivalence classes, with $K<d$. Suppose the context encoder factors through this partition,
\begin{equation}
    f_\theta(x_C)=\mathbf{e}_{\pi(x_C)},
\end{equation}
where $\{\mathbf{e}_1,\ldots,\mathbf{e}_K\}\subset\mathbb{R}^d$. Then the representation covariance of $z_C=f_\theta(x_C)$ has rank at most $K-1$, and is therefore rank-deficient whenever $K<d$. Moreover, if the target representation also depends only on the same class and target position,
\begin{equation}
    f_{\bar\theta}(x)^{(j)}=\mathbf{t}_{\pi(x_C),j},
\end{equation}
then there exists a predictor $g_\phi$ that attains zero JEPA loss on this representation.
\end{proposition}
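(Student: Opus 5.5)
The proof has two parts: a rank bound on the covariance, and an explicit zero-loss predictor.

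\emph{Rank bound.} Write $z_C=\mathbf{e}_{\pi(x_C)}$ and let $q_k=\Pr[\pi(x_C)=k]$. Then $\mathbb{E}[z]=\bar{\mathbf{e}}:=\sum_k q_k\mathbf{e}_k$, and
\begin{equation*}
  \Sigma_z=\sum_{k=1}^K q_k(\mathbf{e}_k-\bar{\mathbf{e}})(\mathbf{e}_k-\bar{\mathbf{e}})^\top .
\end{equation*}
Every column of $\Sigma_z$ lies in $\operatorname{span}\{\mathbf{e}_k-\bar{\mathbf{e}}\}_{k=1}^K$. These $K$ vectors satisfy the linear relation $\sum_k q_k(\mathbf{e}_k-\bar{\mathbf{e}})=0$. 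If some $q_k>0$, this relation is nontrivial, so the span has dimension at most $K-1$. Classes with $q_k=0$ contribute nothing and can be dropped first. Hence $\operatorname{rank}\Sigma_z\le K-1<d$.

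Rank deficiency then gives a unit vector $u$ in the orthogonal complement of the span with $u^\top\Sigma_z u=0$. This is collapse along $u$ in the sense of Definition~\ref{def:cov}, and it shows that the hypothesis of Lemma~\ref{lem:noncollapse} fails, i.e.\ $\lambda_{\min}(\Sigma_z)=0$.

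\emph{Zero-loss predictor.} I would build the predictor by lookup. When the $\mathbf{e}_k$ are distinct, define $g_\phi(\mathbf{e}_k,p_j)=\mathbf{t}_{k,j}$, with arbitrary values off this finite set. Then for every sample
\begin{equation*}
  g_\phi(f_\theta(x_C),p_j)=\mathbf{t}_{\pi(x_C),j}=f_{\bar\theta}(x)^{(j)},
\end{equation*}
so each summand in Eq.~\ref{eq:jepa} vanishes pointwise and the JEPA loss is zero.

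The main obstacle is well-definedness when two classes share an embedding, $\mathbf{e}_k=\mathbf{e}_{k'}$. The lookup is then only consistent if $\mathbf{t}_{k,j}=\mathbf{t}_{k',j}$ for all $j$. I would resolve this in one of two ways. One option is to assume the $\mathbf{e}_k$ are distinct, reading the factorization as injective on classes; this is implicit in calling them class embeddings. The other is to merge classes with a common embedding into one class of a coarser partition, to which the same argument applies. In that case the target map must also factor through the coarser partition, which I would state as part of the hypothesis ``depends only on the same class.''

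Finally, the statement asserts existence of an arbitrary function $g_\phi$. If a parametric predictor is required, I would note that a finite lookup on $K$ distinct points, crossed with finitely many positions, is realizable, for example by a sufficiently wide MLP interpolating finitely many points.
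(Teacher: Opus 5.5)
Your proposal is correct and follows the paper's proof. The paper bounds $\operatorname{rank}(\Sigma_z)\le K-1$ by noting that $z_C$ is supported on the affine span of at most $K$ points; your decomposition $\Sigma_z=\sum_k q_k(\mathbf{e}_k-\bar{\mathbf{e}})(\mathbf{e}_k-\bar{\mathbf{e}})^\top$ with the relation $\sum_k q_k(\mathbf{e}_k-\bar{\mathbf{e}})=0$ is the detailed version of that remark, and the paper uses the same lookup predictor $g_\phi(\mathbf{e}_k,p_j)=\mathbf{t}_{k,j}$. Your caveat about coinciding embeddings is a genuine refinement: the paper silently assumes the $\mathbf{e}_k$ are distinct, or that targets agree whenever embeddings coincide. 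Without that assumption the zero-loss claim can fail, because no function of $(z_C,p_j)$ can distinguish two positive-probability classes that share an embedding but have different targets.
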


\begin{proof}
Since $f_\theta(x_C)$ takes values only in the finite set $\{\mathbf{e}_1,\ldots,\mathbf{e}_K\}$, the support of $z_C$ lies in the affine span of at most $K$ points. Therefore,
\begin{equation}
    \operatorname{rank}(\Sigma_z)\leq K-1.
\end{equation}
If $K<d$, the covariance is rank-deficient.

Now assume the target representation is determined by the same equivalence class and target position. Define the predictor on each class by
\begin{equation}
    g_\phi(\mathbf{e}_k,p_j)=\mathbf{t}_{k,j}.
\end{equation}
Then, for every example,
\begin{equation}
    g_\phi(f_\theta(x_C),p_j)
    =
    f_{\bar\theta}(x)^{(j)}.
\end{equation}
Hence the JEPA loss is zero, despite the context representation being supported on at most $K$ points. If the factorization holds only approximately, the same construction yields low loss while the representation remains low-rank.
\end{proof}

\begin{remark}
Proposition~\ref{prop:textcollapse} does not claim that semantic compression is always undesirable. Coarse invariance can be useful for many tasks. The issue is that token-level language contains lexical, syntactic, and semantic distinctions that may all matter, while the JEPA loss only rewards predictability of the target vector. When merging alternatives reduces conditional variance, squared-error latent prediction can favor a lower-rank representation even if it discards useful linguistic structure.
\end{remark}

\begin{mainresult}[title=Consequence II: Collapse as a Low-Variance Escape Route]
For images, a collapsed context representation leaves the predictor with position alone and incurs content-prediction error whenever target features vary with visual content. For token-level text, the objective can reduce ambiguity by mapping several valid continuations or contexts into a lower-dimensional equivalence class. Thus, collapse is not merely an optimization accident: it is a low-variance escape route made available by deterministic latent prediction.
\end{mainresult}

\subsection{Detailed Proofs for Argument III: Bias--Variance Decomposition}
\label{app:biasvar}

\subsubsection{Decomposition of the JEPA loss}

For a fixed context representation and target position, write
\begin{equation}
    z_C = f_\theta(x_C),
    \qquad
    z^* = z_T^{(j)} = f_{\bar\theta}(x)^{(j)} .
\end{equation}
Even after conditioning on $(z_C,p_j)$, the target $z^*$ may remain random because the visible context need not uniquely determine the masked content. Define the conditional mean target
\begin{equation}
    \bar z^{(j)}(z_C,p_j)
    =
    \mathbb{E}\!\left[z^* \mid z_C,p_j\right].
\end{equation}

\begin{proposition}[Bias--variance decomposition of JEPA loss]
\label{prop:bv}
For any deterministic predictor $g_\phi$, the per-target JEPA loss decomposes as
\begin{align}
  \mathbb{E}\!\left[
    \left\|g_\phi(z_C,p_j)-z^*\right\|_2^2
  \right]
  &=
  \mathbb{E}\!\left[
    \left\|g_\phi(z_C,p_j)-\bar z^{(j)}(z_C,p_j)\right\|_2^2
  \right]
  \nonumber \\
  &\quad+
  \mathbb{E}\!\left[
    \left\|z^*-\bar z^{(j)}(z_C,p_j)\right\|_2^2
  \right].
  \label{eq:bv}
\end{align}
The first term is reducible approximation error. The second term is the irreducible conditional variance of the target representation.
\end{proposition}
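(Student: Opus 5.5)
The plan is the standard add-and-subtract argument for squared error. Abbreviate $g=g_\phi(z_C,p_j)$ and $\bar z=\bar z^{(j)}(z_C,p_j)$, and let $\mathcal{G}$ denote the $\sigma$-algebra generated by $(z_C,p_j)$. The key observation is that $g$ is $\mathcal{G}$-measurable, because the predictor is deterministic and sees only $(z_C,p_j)$. The same holds for $\bar z$ by definition of conditional expectation. Inserting $\pm\bar z$ gives
\begin{equation*}
\|g-z^*\|_2^2=\|g-\bar z\|_2^2+\|z^*-\bar z\|_2^2-2\,(g-\bar z)^\top(z^*-\bar z),
\end{equation*}
so it suffices to show that the cross term has zero expectation.

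For the cross term I will use the tower property:
\begin{equation*}
\mathbb{E}\!\left[(g-\bar z)^\top(z^*-\bar z)\right]=\mathbb{E}\!\left[(g-\bar z)^\top\,\mathbb{E}[z^*-\bar z\mid\mathcal{G}]\right].
\end{equation*}
This step pulls the $\mathcal{G}$-measurable factor $g-\bar z$ outside the inner conditional expectation. The inner conditional expectation is $\mathbb{E}[z^*\mid\mathcal{G}]-\bar z=0$. Taking expectations of the expansion then yields Eq.~\ref{eq:bv} exactly. Finally, I identify the second term as $\mathbb{E}[\operatorname{tr}\operatorname{Var}(z^*\mid z_C,p_j)]$. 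This matches the notation of Eq.~\ref{eq:bias-variance-main}, where $\operatorname{Var}$ of a vector means the trace of its conditional covariance.

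To justify the interpretation of the two terms, I will note two facts. The second term does not involve $g_\phi$ at all, so it is irreducible for a fixed pair of encoders. The first term is nonnegative and vanishes exactly when $g_\phi=\bar z$ almost surely. This recovers Eq.~\ref{eq:conditional-mean-predictor} as the minimizer.

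The only delicate point is integrability. The product $g^\top z^*$ must be integrable for the tower-property step to be legitimate, and the conditional expectation must be well defined. I will assume $\mathbb{E}\|z^*\|_2^2<\infty$ and $\mathbb{E}\|g\|_2^2<\infty$; both are automatic for bounded-norm or normalized representations. Cauchy--Schwarz then makes every term finite. If $\mathbb{E}\|g\|_2^2=\infty$, the left-hand side is infinite and so is the first term on the right, so the identity still holds in $[0,\infty]$.
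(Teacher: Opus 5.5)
Your proposal is correct and follows the same argument as the paper. You add and subtract the conditional mean, expand the squared norm, and make the cross term vanish by conditioning on $(z_C,p_j)$, since both $g_\phi(z_C,p_j)$ and $\bar z^{(j)}(z_C,p_j)$ are functions of $(z_C,p_j)$. Your explicit square-integrability assumption, and the remark that the identity then holds in $[0,\infty]$, make precise a condition the paper leaves implicit.
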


\begin{proof}
Add and subtract the conditional mean:
\begin{equation}
  g_\phi(z_C,p_j)-z^*
  =
  g_\phi(z_C,p_j)-\bar z^{(j)}(z_C,p_j)
  +
  \bar z^{(j)}(z_C,p_j)-z^* .
\end{equation}
Expanding the squared norm gives two squared-norm terms and a cross-term. Conditioning on $(z_C,p_j)$, the cross-term vanishes because $g_\phi(z_C,p_j)$ and $\bar z^{(j)}(z_C,p_j)$ are deterministic functions of $(z_C,p_j)$, while
\begin{equation}
    \mathbb{E}\!\left[
        \bar z^{(j)}(z_C,p_j)-z^*
        \mid z_C,p_j
    \right]
    =
    0 .
\end{equation}
Taking expectation over $(z_C,p_j)$ gives Eq.~\ref{eq:bv}.
\end{proof}

\begin{remark}
Eq.~\ref{eq:bv} is the central reason squared-error latent prediction requires conditional concentration. The predictor can reduce the first term by approximating the conditional mean, but the second term remains whenever the target distribution is not concentrated around that mean. Thus, a deterministic JEPA objective is well aligned only when the conditional mean is representative of the target distribution.
\end{remark}

\subsubsection{Images: local context yields concentrated targets}

In local image prediction, visible neighboring patches strongly constrain the missing target. The remaining uncertainty is not zero, but it is often confined to visually compatible texture, color, or boundary details. Thus, the conditional mean remains a meaningful target rather than an average over unrelated alternatives.

\begin{assumption}[Image target concentration]
\label{assump:image-concentration}
For natural images under the masking scheme considered, the target representation satisfies
\begin{equation}
    \mathbb{E}
    \left[
      \operatorname{Var}
      \left(
        z_T^{(j)} \mid z_C,p_j
      \right)
    \right]
    \leq
    \sigma_{\mathrm{img}}^2 ,
    \label{eq:imagevar}
\end{equation}
where $\sigma_{\mathrm{img}}^2$ is small relative to the reducible prediction error during the informative phase of training.
\end{assumption}

\begin{proposition}[Image JEPA has a useful reducible signal]
\label{prop:imagevar}
Under Assumption~\ref{assump:image-concentration}, the irreducible component of the image JEPA loss is bounded by $\sigma_{\mathrm{img}}^2$. Consequently, whenever
\begin{equation}
    \mathbb{E}
    \left[
      \left\|
        g_\phi(z_C,p_j)-\bar z^{(j)}(z_C,p_j)
      \right\|_2^2
    \right]
    \gg
    \sigma_{\mathrm{img}}^2 ,
\end{equation}
loss reduction is dominated by the reducible approximation term, so improving the predictor and context representation yields a meaningful training signal.
\end{proposition}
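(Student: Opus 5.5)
The plan is to read Proposition~\ref{prop:imagevar} directly off the bias--variance decomposition of Proposition~\ref{prop:bv}, combined with Assumption~\ref{assump:image-concentration}. No new machinery is needed. The argument has two parts: an upper bound on the irreducible term, and a comparison showing that the reducible term dominates the total loss.

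\textbf{Step 1: bounding the irreducible term.} I will first identify the second term in Eq.~\ref{eq:bv} with the averaged conditional variance. By the tower property,
\begin{equation*}
\mathbb{E}\!\left[\left\|z^*-\bar z^{(j)}(z_C,p_j)\right\|_2^2\right]
=\mathbb{E}\!\left[\mathbb{E}\!\left[\left\|z^*-\bar z^{(j)}\right\|_2^2 \mid z_C,p_j\right]\right].
\end{equation*}
The inner quantity is the trace of the conditional covariance, which is exactly what $\operatorname{Var}(z_T^{(j)}\mid z_C,p_j)$ denotes for a vector-valued target. Assumption~\ref{assump:image-concentration} then bounds this term by $\sigma_{\mathrm{img}}^2$. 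The bound holds uniformly over all predictors $g_\phi$, since the term does not involve $g_\phi$. This settles the first claim.

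\textbf{Step 2: dominance of the reducible term.} Write $A$ for the reducible approximation error and $B\le\sigma_{\mathrm{img}}^2$ for the irreducible term, so that the loss is $\mathcal{L}=A+B$. Then
\begin{equation*}
\frac{A}{\mathcal{L}}\ \ge\ \frac{A}{A+\sigma_{\mathrm{img}}^2}\ \to\ 1
\qquad\text{as}\qquad \frac{A}{\sigma_{\mathrm{img}}^2}\to\infty .
\end{equation*}
Concretely, if $A\ge M\sigma_{\mathrm{img}}^2$, then $A/\mathcal{L}\ge M/(M+1)$. Moreover, any change in the loss under updates to the predictor or context encoder satisfies $\mathcal{L}-\mathcal{L}'=(A-A')+(B-B')$, with $B,B'\in[0,\sigma_{\mathrm{img}}^2]$. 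The achievable decrease therefore comes from $A$, up to an additive slack of at most $\sigma_{\mathrm{img}}^2$. This is the precise meaning I will give to ``loss reduction is dominated by the reducible approximation term.''

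\textbf{Main obstacle.} The hardest step is formalizing the informal notions ``$\gg$'' and ``meaningful training signal.'' I will make them precise with the ratio parameter $M$ introduced above. There is also a subtlety when the context encoder changes. Then $\bar z^{(j)}$ and $B$ also change, because conditioning on a different $z_C$ changes the conditional mean. The key point is that the assumption caps $B$ at $\sigma_{\mathrm{img}}^2$ at every point of the informative phase. Hence the variation in $B$ is at most $\sigma_{\mathrm{img}}^2$, which is negligible relative to $A$ in this regime.
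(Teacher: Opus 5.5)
Your proposal is correct and follows the paper's own argument: invoke the decomposition of Proposition~\ref{prop:bv}, bound the irreducible term by $\sigma_{\mathrm{img}}^2$ via Assumption~\ref{assump:image-concentration}, and conclude that the reducible term dominates. Your ratio parameter $M$ and your remark that encoder updates can shift $B$ by at most $\sigma_{\mathrm{img}}^2$ only make explicit what the paper's short proof leaves informal.
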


\begin{proof}
By Proposition~\ref{prop:bv}, the JEPA loss decomposes into a reducible approximation term and an irreducible conditional-variance term. Assumption~\ref{assump:image-concentration} bounds the irreducible term by $\sigma_{\mathrm{img}}^2$. When the approximation term is much larger than this bound, changes that improve $g_\phi(z_C,p_j)$ as an estimate of $\bar z^{(j)}(z_C,p_j)$ dominate changes in the total loss.
\end{proof}

\begin{remark}
Assumption~\ref{assump:image-concentration} does not require deterministic reconstruction of the masked patch. It only requires that visible context restrict the target representation enough that the conditional mean is representative. Spatial smoothness and local continuity make this plausible for images: nearby visible patches narrow the set of compatible target representations, leaving residual uncertainty mostly in fine texture or detail.
\end{remark}

\subsubsection{Text: multi-hypothesis conditional targets}

For token-level text, a fixed context can admit several valid continuations. This is not a problem for distributional language modeling: masked or causal language models predict a distribution over the vocabulary, so multiple valid tokens can all receive probability mass. The issue arises for deterministic latent prediction, where the model must output a single vector under squared-error loss.

Let $x_{j\leftarrow v}$ denote the sequence obtained by inserting token $v$ at position $j$ while keeping the observed context fixed. Define the target representation induced by continuation $v$ as
\begin{equation}
    h_j(v;x_C)
    =
    f_{\bar\theta}(x_{j\leftarrow v})^{(j)} .
\end{equation}
Under the conditional token distribution $v\sim p(\cdot\mid x_C)$, the latent target is the random variable
\begin{equation}
    z^* = h_j(v;x_C).
\end{equation}

\begin{proposition}[Conditional variance under multiple valid continuations]
\label{prop:textvar}
For a fixed context $x_C$, the irreducible variance of the text target is
\begin{equation}
    \operatorname{Var}(z^*\mid z_C,p_j)
    =
    \sum_{v\in\mathcal{V}}
    p(v\mid x_C)
    \left\|
      h_j(v;x_C)
      -
      \bar z^{(j)}(z_C,p_j)
    \right\|_2^2 ,
    \label{eq:textvar}
\end{equation}
where
\begin{equation}
    \bar z^{(j)}(z_C,p_j)
    =
    \sum_{v\in\mathcal{V}}
    p(v\mid x_C) h_j(v;x_C)
\end{equation}
is the conditional mean. If the distribution assigns nonzero mass to multiple separated continuations, then this variance is nonzero.
\end{proposition}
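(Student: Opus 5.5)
The plan is to identify the conditional law of $z^*$ given $(z_C,p_j)$ with the pushforward of $p(\cdot\mid x_C)$ under $v\mapsto h_j(v;x_C)$, and then compute the variance of a discrete random vector directly. Fix the context $x_C$. Then $z_C=f_\theta(x_C)$ and $p_j$ are fixed. The only remaining randomness in the target is the masked token $v\sim p(\cdot\mid x_C)$, and by definition $z^*=h_j(v;x_C)$. I will state this modelling step explicitly: conditioning on $(z_C,p_j)$ is treated as conditioning on $x_C$ and position $j$. This is the same convention used in Proposition~\ref{prop:text-nonconcentration} and in Eq.~\ref{eq:centroid-main}.

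\textbf{Mean and variance.} With that identification, the conditional mean is the expectation of a discrete random vector,
\[
\mathbb{E}[z^*\mid z_C,p_j]=\sum_{v\in\mathcal{V}} p(v\mid x_C)\,h_j(v;x_C)=\bar z^{(j)}(z_C,p_j).
\]
This is the centroid formula. The irreducible term in Proposition~\ref{prop:bv} is $\mathbb{E}[\|z^*-\bar z^{(j)}\|_2^2\mid z_C,p_j]$, interpreted as the trace of the conditional covariance. Writing this expectation as a sum over the finite vocabulary gives Eq.~\ref{eq:textvar} immediately.

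\textbf{Positivity.} Suppose $u\neq v$ both have positive mass and $\|h_j(u;x_C)-h_j(v;x_C)\|_2\ge\Delta>0$, as in Assumption~\ref{assump:target-separation}. By the triangle inequality, at least one of them lies at distance at least $\Delta/2$ from $\bar z^{(j)}$. I would make this quantitative:
\[
\operatorname{Var}(z^*\mid z_C,p_j)\ge p(u\mid x_C)\,\|h_j(u)-\bar z\|^2+p(v\mid x_C)\,\|h_j(v)-\bar z\|^2 .
\]
Minimizing the right-hand side over the location of $\bar z$ places it on the segment between $h_j(u)$ and $h_j(v)$. The minimum value is $\frac{p_u p_v}{p_u+p_v}\,\|h_j(u)-h_j(v)\|^2$, which is at least $\tfrac{\epsilon}{2}\Delta^2>0$ when $u,v\in\mathcal{V}_\epsilon(x_C)$. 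This gives both the nonzero claim and an explicit floor.

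\textbf{Main obstacle.} The delicate step is the first one, the identification of conditioning. $z_C$ is a possibly non-injective function of $x_C$, so conditioning on $z_C$ pools several contexts. The true conditional law is then a mixture over those contexts, and its variance can only be larger than the per-context value, by the law of total variance. I would handle this in one of two ways. The first is to note that Eq.~\ref{eq:textvar} holds exactly when $z_C$ determines $x_C$. The second, covering the general case, is to state that the displayed formula gives the within-context contribution. That contribution is a lower bound on the true conditional variance, so the positivity conclusion survives unchanged.
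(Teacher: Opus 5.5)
Your argument is correct and is essentially the paper's. The paper also reads Eq.~\ref{eq:textvar} straight off the definition of variance for $z^*=h_j(v;x_C)$ with $v\sim p(\cdot\mid x_C)$, tacitly making your identification of conditioning on $(z_C,p_j)$ with fixing $x_C$, and it gets positivity simply because that law is not a point mass. Your explicit floor $\frac{p_up_v}{p_u+p_v}\|h_j(u;x_C)-h_j(v;x_C)\|_2^2$ is therefore a correct extra, and for two continuations it is at least as strong as Proposition~\ref{prop:textvar-lower}, since $p_u+p_v\le 1$.

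One small fix is needed in your obstacle paragraph. The law of total variance gives $\operatorname{Var}(z^*\mid z_C,p_j)\ge\mathbb{E}[\operatorname{Var}(z^*\mid x_C,p_j)\mid z_C,p_j]$, which is a weighted average over all contexts sharing the same $z_C$. It is not bounded below by the variance of your particular $x_C$: pooling with an equally likely zero-variance context that has the same mean halves it. Positivity still survives, because in the discrete text setting your $x_C$ enters that average with positive weight.
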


\begin{proof}
The expression follows from the definition of conditional variance for the random variable $z^*=h_j(v;x_C)$ with $v\sim p(\cdot\mid x_C)$. If there exist two continuations $u$ and $v$ with $p(u\mid x_C)>0$, $p(v\mid x_C)>0$, and
\begin{equation}
    \left\|
        h_j(u;x_C)-h_j(v;x_C)
    \right\|_2
    >
    0,
\end{equation}
then the conditional distribution of $z^*$ is not a point mass. Therefore its variance around the conditional mean is strictly positive.
\end{proof}

A quantitative lower bound follows under a separation condition.

\begin{assumption}[Separated plausible continuations]
\label{assump:separated-continuations}
For a context $x_C$, suppose there exist $m\geq2$ plausible continuations $v_1,\ldots,v_m$ such that
\begin{equation}
    p(v_i\mid x_C)\geq \alpha
    \qquad
    \text{for all } i,
\end{equation}
and their target representations are pairwise separated:
\begin{equation}
    \left\|
        h_j(v_i;x_C)-h_j(v_\ell;x_C)
    \right\|_2
    \geq
    \Delta
    \qquad
    \text{for } i\neq \ell .
\end{equation}
\end{assumption}

\begin{proposition}[Lower bound from separated alternatives]
\label{prop:textvar-lower}
Under Assumption~\ref{assump:separated-continuations},
\begin{equation}
    \operatorname{Var}(z^*\mid z_C,p_j)
    \geq
    \frac{\alpha^2 m(m-1)}{2}\Delta^2 .
\end{equation}
\end{proposition}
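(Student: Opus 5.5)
Our plan is to use the pairwise form of the variance. For any random vector $z^*$ with conditional law $v\sim p(\cdot\mid x_C)$, and $z'$ an independent copy, we have
\begin{equation*}
    \operatorname{Var}(z^*\mid z_C,p_j)
    =
    \tfrac12\,\mathbb{E}\!\left[\|z^*-z'\|_2^2\right]
    =
    \tfrac12\sum_{u,w\in\mathcal{V}} p(u\mid x_C)\,p(w\mid x_C)\,\bigl\|h_j(u;x_C)-h_j(w;x_C)\bigr\|_2^2 .
\end{equation*}
We start from the explicit form of Proposition~\ref{prop:textvar}. The identity above follows by expanding $\|z^*-z'\|^2=\|z^*-\bar z\|^2+\|z'-\bar z\|^2-2\langle z^*-\bar z,z'-\bar z\rangle$. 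The cross term vanishes by independence, because each factor has conditional mean zero.

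Next we lower-bound the double sum. Every summand is nonnegative, so we may discard all terms except the ordered pairs $(v_i,v_\ell)$ with $i\neq\ell$ drawn from the $m$ continuations of Assumption~\ref{assump:separated-continuations}. There are $m(m-1)$ such ordered pairs. Each contributes at least $\alpha\cdot\alpha\cdot\Delta^2$, using $p(v_i\mid x_C)\ge\alpha$ together with the pairwise separation. This yields
\begin{equation*}
    \operatorname{Var}(z^*\mid z_C,p_j)\;\ge\;\tfrac12\,m(m-1)\,\alpha^2\Delta^2 ,
\end{equation*}
which is the claimed bound.

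The only real obstacle is the first step. We must establish the pairwise-difference identity cleanly and make sure the conditioning is consistent, since the variance is stated conditional on $(z_C,p_j)$ while the mixture is over $p(\cdot\mid x_C)$. As in Proposition~\ref{prop:textvar}, we will take the fixed context $x_C$ as determining the conditioning, so that $z^*$ is exactly the finite mixture over $v$. With that convention the rest is bookkeeping. It also helps to count ordered rather than unordered pairs, so that the factor $\tfrac12$ combines with $m(m-1)$ exactly as in the statement.
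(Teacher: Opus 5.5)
Your proposal is correct and matches the paper's proof: both use the identity $\operatorname{Var}(Z)=\tfrac12\mathbb{E}\|Z-Z'\|_2^2$ and keep only the $m(m-1)$ ordered pairs of separated continuations, each contributing at least $\alpha^2\Delta^2$. Your derivation of the identity via the vanishing cross term, and your note on the conditioning convention, are small additions the paper leaves implicit.
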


\begin{proof}
For any random vector $Z$ with an independent copy $Z'$,
\begin{equation}
    \operatorname{Var}(Z)
    =
    \frac{1}{2}
    \mathbb{E}\!\left[\|Z-Z'\|_2^2\right],
\end{equation}
where $\operatorname{Var}(Z)=\mathbb{E}\|Z-\mathbb{E}Z\|_2^2$. Applying this identity to $Z=z^*$ and restricting the expectation to the ordered pairs $(v_i,v_\ell)$ with $i\neq \ell$ gives
\begin{equation}
    \operatorname{Var}(z^*\mid z_C,p_j)
    \geq
    \frac{1}{2}
    \sum_{i\neq \ell}
    p(v_i\mid x_C)p(v_\ell\mid x_C)
    \Delta^2 .
\end{equation}
Since each $p(v_i\mid x_C)\geq\alpha$, the bound follows.
\end{proof}

The squared-error optimal predictor is therefore
\begin{equation}
    g_\phi^\star(z_C,p_j)
    =
    \bar z^{(j)}(z_C,p_j)
    =
    \sum_{v\in\mathcal{V}}
    p(v\mid x_C)h_j(v;x_C).
    \label{eq:centroid}
\end{equation}
For text, this point is a centroid over plausible continuations. Unless the continuations are already close in representation space, the centroid need not correspond to any actual continuation. This clarifies why language modeling objectives avoid the same failure mode: they preserve uncertainty by predicting a distribution over tokens, whereas deterministic latent MSE prediction compresses that distribution into one conditional mean. Reducing the variance requires either a distributional or mixture-valued predictor, or an encoder that brings distinct continuations closer together. The latter route is precisely the collapse mechanism analyzed in Sec.~\ref{app:collapse}.

\begin{mainresult}[title=Consequence III: Centroid Degeneracy]
Squared-error latent prediction learns a conditional mean. This is appropriate when the target distribution is conditionally concentrated, as in local image prediction. For token-level text, plausible continuations can be separated in representation space. The conditional mean then becomes a centroid over alternatives rather than a representation of a single coherent target. Thus, deterministic latent prediction faces a bias--variance trade-off: preserve distinctions and incur irreducible variance, or reduce variance by collapsing the distinctions the representation should encode.
\end{mainresult}

\section{Empirical Metrics: Definitions and Rationale}
\label{app:metrics}

Each of the three arguments in Sec.~\ref{sec:theory} is validated by one or more
diagnostic metrics tracked throughout training on held-out validation batches.
We define each metric precisely below and explain why it constitutes evidence for
the corresponding argument.

\subsection{Metrics for Argument I: Mutual Information Proxy}
\label{app:mi_proxy}

\paragraph{InfoNCE MI proxy.}
Direct estimation of $I(z_C;z_T)$ is intractable in high-dimensional representation spaces. We therefore use the InfoNCE objective as a relative proxy for context--target predictability~\citep{Poole2019VariationalMI,Oord2018CPC}. For a batch or queue of $N$ candidate targets, define
\begin{equation}
  \widehat{I}_{\mathrm{NCE}}(z_C;z_T)
  =
  \log N
  -
  \mathcal{L}_{\mathrm{InfoNCE}},
  \label{eq:mi_proxy}
\end{equation}
where
\begin{equation}
  \mathcal{L}_{\mathrm{InfoNCE}}
  =
  -\,\mathbb{E}\!\left[
    \log
    \frac{
      \exp\!\left(s(z_C,z_T)/\tau\right)
    }{
      \frac{1}{N}\sum_{k=1}^{N}
      \exp\!\left(s(z_C,z_T^{(k)})/\tau\right)
    }
  \right].
  \label{eq:infonce}
\end{equation}
Here $s(\cdot,\cdot)$ is the similarity function; in our experiments we use the dot product, $s(z_C,z_T)=z_C^\top z_T$. Negatives $z_T^{(k)}$ are drawn from a MoCo-style queue of size $N=2{,}048$, with temperature $\tau=0.1$.

The proxy is high when $z_C$ can reliably identify its paired target $z_T$ among many distractors, and low when the context contains limited information about the target representation. We use it only as a comparative diagnostic across training: a sustained plateau indicates that context--target predictability has saturated. Under our hypothesis, T-JEPA should plateau early because masked text often admits several plausible continuations, whereas I-JEPA should continue improving as spatial context increasingly constrains the masked image target.
\subsection{Metrics for Argument II: Effective Rank and Cosine Similarity}

\paragraph{Effective rank.}
Given the representation covariance
\begin{equation}
    \Sigma_z
    =
    \mathbb{E}\!\left[
        (z-\mathbb{E}[z])(z-\mathbb{E}[z])^\top
    \right],
\end{equation}
let $\lambda_1\geq\lambda_2\geq\cdots\geq\lambda_d\geq0$ be its eigenvalues. We define the effective rank as the exponential of the spectral entropy:
\begin{equation}
  \mathrm{erank}(\Sigma_z)
  =
  \exp\!\left(
    -\sum_{i=1}^{d} p_i \log p_i
  \right),
  \qquad
  p_i = \frac{\lambda_i}{\sum_{j=1}^{d} \lambda_j}.
  \label{eq:erank}
\end{equation}
When the spectrum is spread across many directions, $\mathrm{erank}(\Sigma_z)$ is large; when most variance is concentrated in a few directions, it is small. A drop in effective rank is therefore a spectral signature of dimensional collapse, as predicted by Proposition~\ref{prop:textcollapse}. In the degenerate case where all eigenvalues vanish, the representation is constant; in practice, we compute effective rank with standard numerical stabilization. Effective rank is computed from full-sequence context representations every 10 training steps.

\paragraph{Pairwise cosine similarity.}
As a complementary geometry-level collapse indicator, we track pairwise cosine similarities over a random subsample of $N=2{,}048$ held-out representations:
\begin{equation}
  \bar{s}
  =
  \frac{2}{N(N-1)}
  \sum_{i < j}
  \frac{z_i^\top z_j}{\|z_i\|_2\,\|z_j\|_2}.
  \label{eq:cosine}
\end{equation}
We also report the 95th percentile $s_{95}$ and standard deviation $\sigma_s$ of the pairwise cosine distribution. Non-collapsed representations occupy multiple directions, so $\bar{s}$ remains bounded away from $1$ and the distribution retains nontrivial spread. Under directional collapse, representations align, driving $\bar{s}\to1$, $s_{95}\to1$, and $\sigma_s\to0$. Together, effective rank and cosine statistics distinguish low-dimensional spectral collapse from mere changes in representation scale. Full results are reported in App.~\ref{app:pairwise}.
\subsection{Metrics for Argument III: Irreducible Variance Estimator}

\paragraph{Estimating conditional target variance.}
We estimate the irreducible component in Proposition~\ref{prop:bv} by sampling multiple plausible targets for the same context and measuring their dispersion in target-representation space. For each held-out context and target position, let
\begin{equation}
    z_1^*,\ldots,z_K^*
\end{equation}
denote $K$ target representations obtained from alternative completions or perturbations that are compatible with the same visible context. We estimate
\begin{equation}
    \widehat{\operatorname{Var}}(z^*\mid z_C,p_j)
    =
    \sum_{k=1}^{K}
    w_k
    \left\|
        z_k^*-\bar z^*
    \right\|_2^2,
    \qquad
    \bar z^*
    =
    \sum_{k=1}^{K} w_k z_k^*,
    \label{eq:var-estimator}
\end{equation}
where $w_k=1/K$ for uniformly sampled variants and $\sum_k w_k=1$ for importance-weighted completions. We use $K=16$ variants per context, average over $N=200$ held-out contexts, and log the estimate every 100 training steps on both train and validation splits.

\paragraph{Images.}
For images, alternative targets are generated by applying small pixel-space perturbations to the same image while keeping the mask fixed:
\begin{equation}
  x_k = \operatorname{clip}\!\left(x+\varepsilon_k,\,-1,\,1\right),
  \qquad
  \varepsilon_k \sim \mathcal{N}(0,\sigma_{\mathrm{aug}}^2 I),
  \label{eq:aug}
\end{equation}
with $\sigma_{\mathrm{aug}}=0.2$, together with sparse sign-flip perturbations. Each perturbed image is passed through the EMA target encoder $f_{\bar\theta}$, and the target embeddings at the masked positions are mean-pooled to obtain $z_k^*\in\mathbb{R}^{1280}$. We then apply Eq.~\ref{eq:var-estimator} with uniform weights.

\paragraph{Text.}
For text, alternative targets are sampled from a frozen masked-language-model oracle. Given the visible context $x_C$, the oracle defines a proposal distribution over span completions. We sample $K=16$ completions whose token probabilities exceed $\varepsilon_{\min}=0.001$, fill the masked span to obtain completed sequences $x_{S\leftarrow s_k}$, and pass each completed sequence through the EMA target encoder:
\begin{equation}
    z_k^*
    =
    \frac{1}{|S|}
    \sum_{j\in S}
    f_{\bar\theta}(x_{S\leftarrow s_k})^{(j)} .
\end{equation}
Completions are weighted by their oracle probabilities,
\begin{equation}
    w_k
    =
    \frac{
        q(s_k\mid x_C)
    }{
        \sum_{\ell=1}^{K} q(s_\ell\mid x_C)
    },
    \qquad
    q(s_k\mid x_C)
    =
    \prod_{j\in S}
    p(v_k^{(j)}\mid x_C).
\end{equation}
We then compute Eq.~\ref{eq:var-estimator} using these normalized weights.

A persistently larger text variance during the stable phase indicates that multiple plausible completions remain separated in target-representation space. A later drop in this variance is not by itself evidence that ambiguity has disappeared; when accompanied by rank degeneration and cosine collapse, it indicates that the encoder has reduced variance by compressing the representations themselves.

\section{Multi-Seed Robustness Of The Collapse Diagnostics}
\label{app:multiseed}

To verify that the collapse signature reported for T-JEPA under seed 42 in Sec.~\ref{sec:arg_setup} is not an artifact of a single data sample, we repeat \emph{T-JEPA-only} training with five independent seeds (0--4), each re-sampling its own 100K-sentence subset from C4 (each seed has it's own 100K sample sentences); no I-JEPA comparison is included in this appendix, as the goal here is solely to check within-model stability of the diagnostics across data draws. All five T-JEPA runs reproduce the same qualitative failure chain observed for seed 42: the MI proxy stays near zero, well below the entropy ceiling, throughout training (Figure~\ref{fig:mi_rank_5_seed}, left); effective rank stays near-degenerate for the first five epochs, rises sharply and jointly across seeds between epochs 6--9, then collapses back to near 1 by epoch 12--15 (Figure~\ref{fig:mi_rank_5_seed}, right; Figure~\ref{fig:loss_rank_5_seed}, right); validation loss shows sharp transient spikes in the same epoch 9--11 window before settling to a low, degenerate value once rank collapses (Figure~\ref{fig:loss_rank_5_seed}, left); irreducible variance declines monotonically with a common late-training contraction coinciding with rank collapse rather than resolved ambiguity (Figure~\ref{fig:irr_5_seed}); and pairwise cosine similarity stays pinned near 1.0, with only a brief seed-dependent dip around epochs 9--12 (Figure~\ref{fig:cosine_5_seed}). These results confirm that the predictability-saturation-before-collapse ordering established for T-JEPA in the main text is a property of it's objective itself, not of any particular data draw.

\begin{figure}[H]
    \centering
    \includegraphics[width=\linewidth]{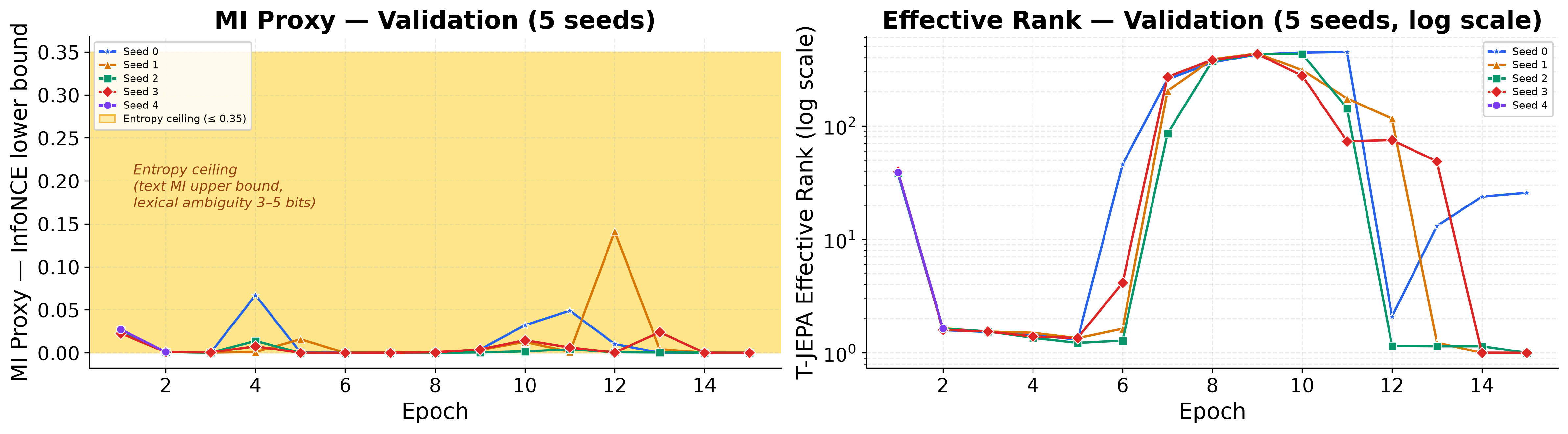}
    \caption{T-JEPA validation MI proxy (left) and effective rank (right, log scale) across 5 seeds ($0$–$4$), each independently re-sampling 100K C4 sentences, confirming early predictability saturation is not a sampling artifact.}
    \label{fig:mi_rank_5_seed}
\end{figure}
\begin{figure}[H]
    \centering
    \includegraphics[width=\linewidth]{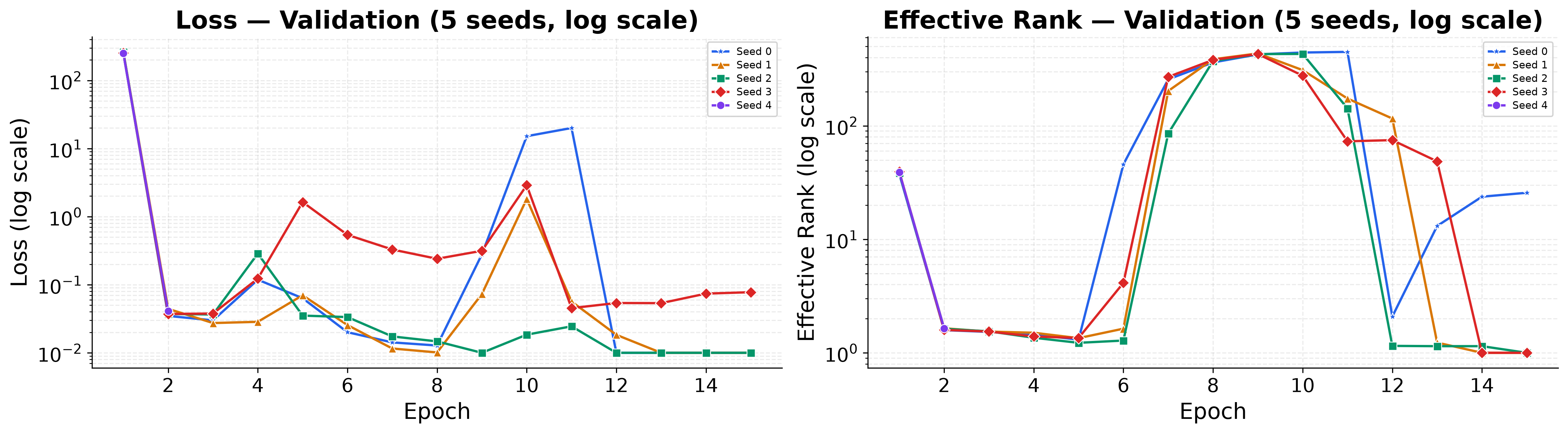}
    \caption{T-JEPA validation loss (left) and effective rank (right, both log scale) across 5 seeds ($0$–$4$), each independently re-sampling 100K C4 sentences, confirming the loss–rank collapse signature is robust to sampling variation.}
    \label{fig:loss_rank_5_seed}
\end{figure}
\begin{figure}[H]
    \centering
    \includegraphics[width=\linewidth]{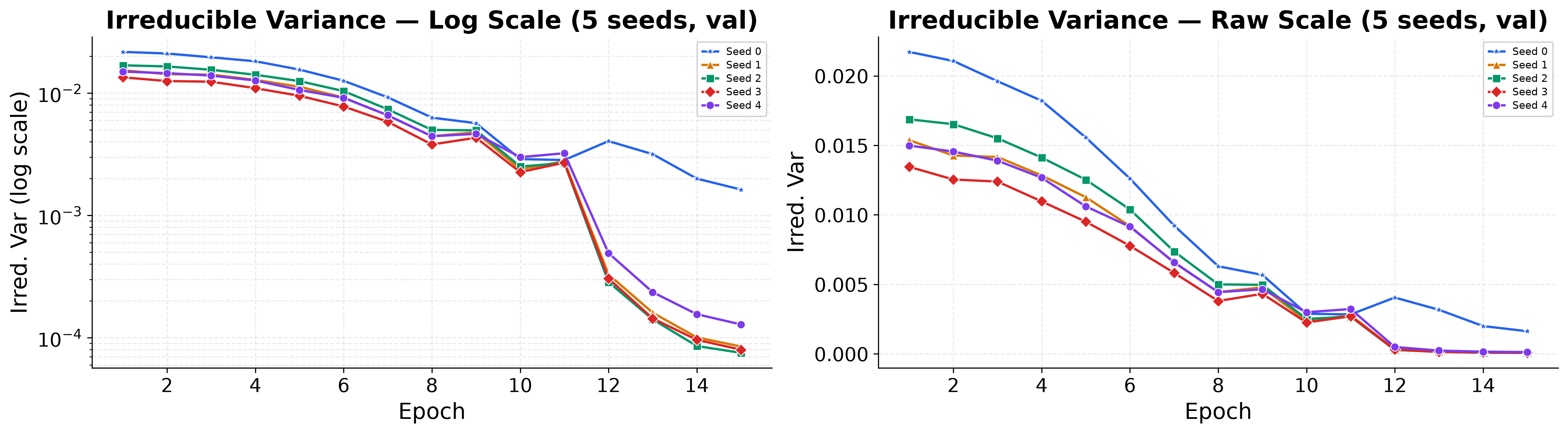}
    \caption{Irreducible variance $\widehat{\mathrm{Var}}(z^* \mid z_C, p_j)$ for T-JEPA (log scale, left; raw scale, right) across 5 seeds ($0$–$4$), each independently re-sampling 100K C4 sentences, showing the elevated conditional-variance floor is not seed-specific.}
    \label{fig:irr_5_seed}
\end{figure}
\begin{figure}[t]
    \centering
    \includegraphics[width=\linewidth]{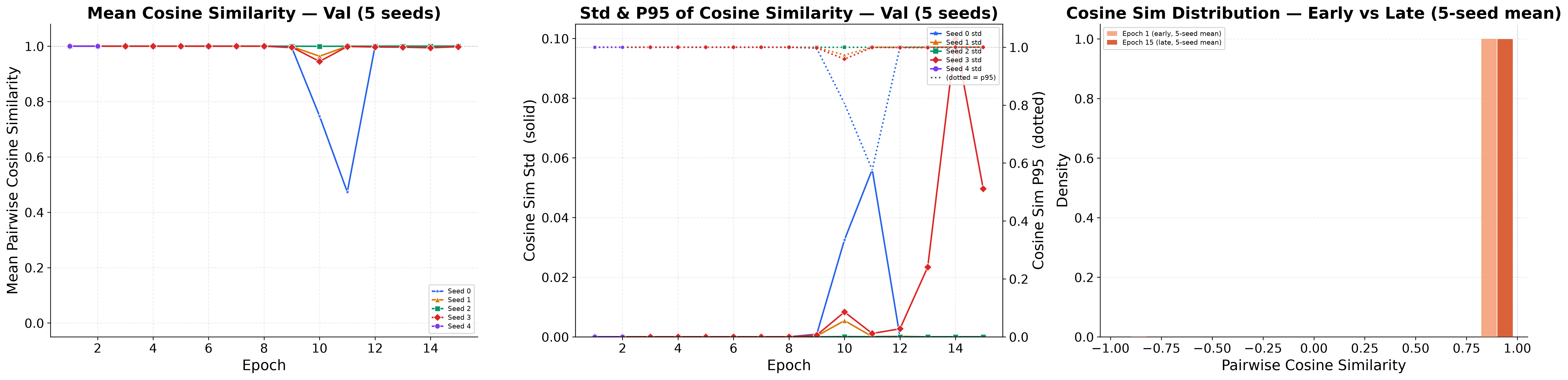}
    \caption{Pairwise cosine-similarity statistics for T-JEPA across 5 seeds ($0$–$4$, each independently re-sampling 100K C4 sentences): mean similarity (left), std/p95 (center), and early-vs-late histogram averaged over seeds (right), confirming directional collapse is robust to sampling variation.}
    \label{fig:cosine_5_seed}
\end{figure}

\section{Supplementary Metric for Argument II (Seed 42)}
\label{app:pairwise}

\begin{figure}[t]
    \centering
    \includegraphics[width=\linewidth]{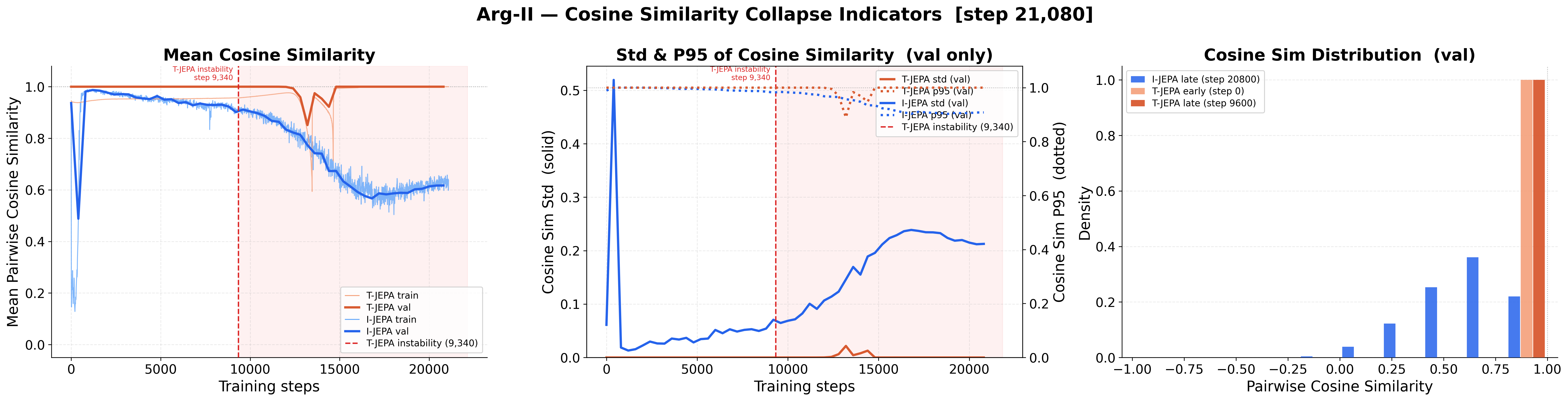}
    \caption{%
        \textbf{Pairwise cosine similarity of I-JEPA and T-JEPA representations.}
        Left: mean pairwise cosine similarity over training.
        T-JEPA rises to ${\approx}1.0$ after the instability point at step 9{,}340, while I-JEPA remains bounded away from $1$.
        Centre: standard deviation and 95th percentile of validation cosine similarities.
        After collapse, T-JEPA has vanishing dispersion and near-unit upper quantiles.
        Right: late-training histogram of pairwise cosine similarities.
        T-JEPA concentrates at $1.0$, indicating directional collapse, while I-JEPA maintains a broad distribution.%
    }
    \label{fig:cosine_collapse}
\end{figure}

\paragraph{Cosine similarity.}
Figure~\ref{fig:cosine_collapse} provides a geometry-level confirmation of the rank-collapse diagnostics. I-JEPA maintains a broad pairwise-cosine distribution, with substantial dispersion throughout training, indicating that different inputs continue to occupy diverse directions in representation space. T-JEPA behaves differently: after the instability point at step 9{,}340, its mean pairwise cosine similarity approaches $1$, its dispersion vanishes, and its late-training histogram concentrates at the right edge. Thus, the drop in effective rank is not merely a change in representation scale; held-out inputs are mapped to nearly the same direction in latent space.

\section{Experimental Details for Section~\ref{sec:downstream}}
\label{app:setup}

\paragraph{Dataset and benchmarks.}
All models are pretrained on 3M English C4 sentences with maximum sequence length 256, using the \texttt{bert-base-uncased} tokenizer. After pretraining, encoders are frozen and evaluated as fixed feature extractors. We report classification performance on IMDB and SNLI using accuracy and F1, and retrieval performance on MTEB/FEVER and MTEB/MSMARCO using nDCG@10 and Recall@100. Mean $\pm$ std are reported over 5 independent runs.

\paragraph{Baselines and corruptions.}
We compare T-JEPA against BERT, VICReg, Barlow Twins, and BYOL. BERT is trained with standard masked language modeling. The two-view SSL baselines are trained under two text corruption schemes. In the Mask setting ($^\text{M}$), both views are independently corrupted by random non-overlapping span masks covering 15--20\% of tokens, with span lengths between 1 and 5. In the Replace setting ($^\text{R}$), the first view is the original sentence and the second replaces 15--20\% of tokens using a pretrained \texttt{bert-base-uncased} proposal model, with replacements constrained to differ from the original tokens. T-JEPA uses random span masking with 1--5 non-overlapping spans per sentence, span lengths between 1 and 5, and the same 15--20\% masking budget.

\paragraph{Optimization.}
All downstream-comparison models are trained for 10 epochs with AdamW, initial learning rate $10^{-4}$, and weight decay $10^{-2}$. We use 100 warmup steps followed by linear learning-rate decay to zero. VICReg uses loss weights $\lambda=25$, $\mu=25$, and $\nu=1$; Barlow Twins uses redundancy-reduction coefficient $\lambda=0.005$; BYOL uses EMA target momentum $\tau_{\mathrm{base}}=0.996$ annealed to $1$ with cosine scheduling and its standard predictor architecture. T-JEPA uses EMA momentum $\tau=0.9996$.

\paragraph{Relation to the diagnostic setup.}
The downstream protocol in Sec.~\ref{sec:downstream} and the diagnostic protocol in Sec.~\ref{sec:arg_setup} serve different purposes and should not be read as competing configurations. The downstream comparison is deliberately \emph{standardized}: T-JEPA and every baseline share the same training budget, warmup length, and optimizer schedule, so no method is given a configuration advantage. The diagnostic setup instead uses a longer, 15-epoch warmup for a different reason: not to induce a failure that would otherwise not occur, but to \emph{slow down its onset} enough that the internal sequence of events -- MI saturation, elevated conditional variance, rank degeneration, and cosine collapse -- can be resolved as distinct, separately measurable stages rather than a single abrupt event. Critically, T-JEPA does not avoid collapse under the shorter, standardized schedule used in Sec.~\ref{sec:downstream}; if anything, it collapses \emph{faster} there, compressing the same five-stage sequence into a window too narrow to dissect on its own (Table~\ref{tab:downstream}). The two setups therefore probe the same underlying failure at two different temporal resolutions: the standardized schedule confirms that the failure occurs under realistic, no-advantage training conditions, while the extended-warmup schedule is a diagnostic instrument used to slow the failure down enough to explain \emph{why} it happens. The longer warmup is a measurement choice, not a cause of the collapse.

\section{Visualization: The Phenomenon of Centroid Degeneration}
\label{app:centroid}
\begin{figure}
    \centering
    \includegraphics[width=\linewidth]{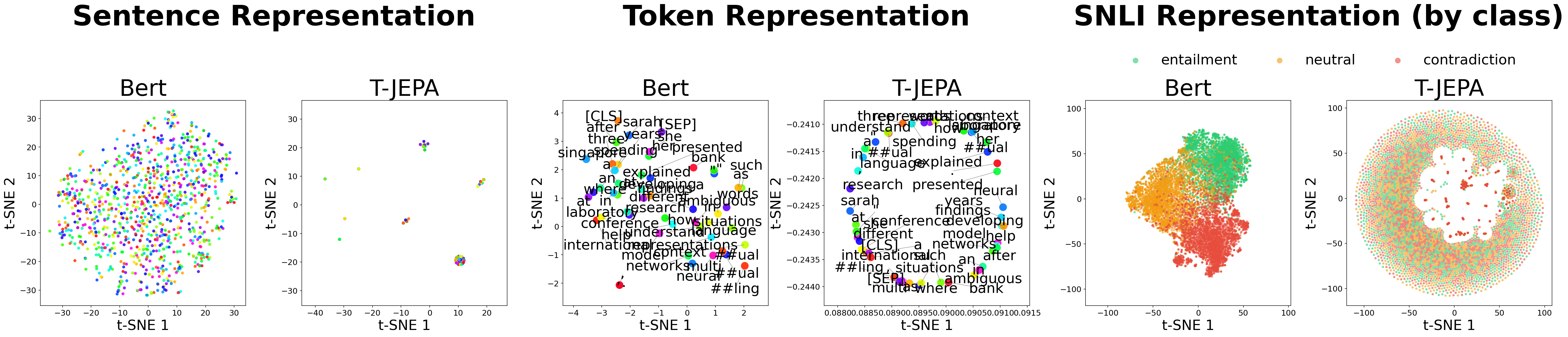}
    \caption{t-SNE visualization of representation geometry for BERT and T-JEPA.
    \textit{Left:} sentence-level embeddings from the raw pretrained last hidden
    state, computed over 10K held-out C4 sentences. \textit{Middle:} token-level
    embeddings for a single representative sentence, with sequential tokens
    connected by edges. \textit{Right:} sentence embeddings after fine-tuning on
    SNLI, colored by label (entailment / neutral / contradiction).}
    \label{fig:presentation}
\end{figure}

Figure~\ref{fig:presentation} gives a qualitative counterpart to the quantitative
collapse diagnostics of Sec.~\ref{sec:arg-I_val}--\ref{sec:arg-III_val}. On raw pretrained C4 sentences (left),
BERT embeddings spread broadly and continuously across the t-SNE plane, whereas
T-JEPA embeddings collapse into a handful of tight, well-isolated clusters --
a low-rank, centroid-like geometry consistent with Prop.~\ref{prop:textcollapse}. The same
degeneration is visible at the token level (middle): BERT places tokens at
distinct, semantically organized positions, while T-JEPA compresses all tokens
of the same sentence into a band spanning a t-SNE range of $\sim$0.003,
indicating near-total loss of token-level distinctions even before any
downstream adaptation. Fine-tuning on SNLI (right) does not undo this geometry:
BERT's embeddings reorganize into class-conditional regions that separate
entailment, neutral, and contradiction examples, whereas T-JEPA's embeddings
remain confined to a narrow, ring-like manifold with all three classes fully
intermixed. This shows that the representational collapse identified during
pretraining is not corrected by fine-tuning, and directly explains T-JEPA's
chance-level SNLI performance in Table~\ref{tab:downstream}.

\section{Visualization: Representation Activation Heatmaps}
\label{app:rep_act}

Figures~\ref{fig:rep_act_3d} and~\ref{fig:rep_act_2d} visualize last hidden-state activations of each context encoder on the same held-out sentence. These plots are intended as qualitative diagnostics of representation geometry, complementing the quantitative rank and cosine-similarity metrics in Sec.~\ref{sec:arg2} and App.~\ref{app:pairwise}.

\begin{figure}[t]
    \centering
    \includegraphics[width=\linewidth]{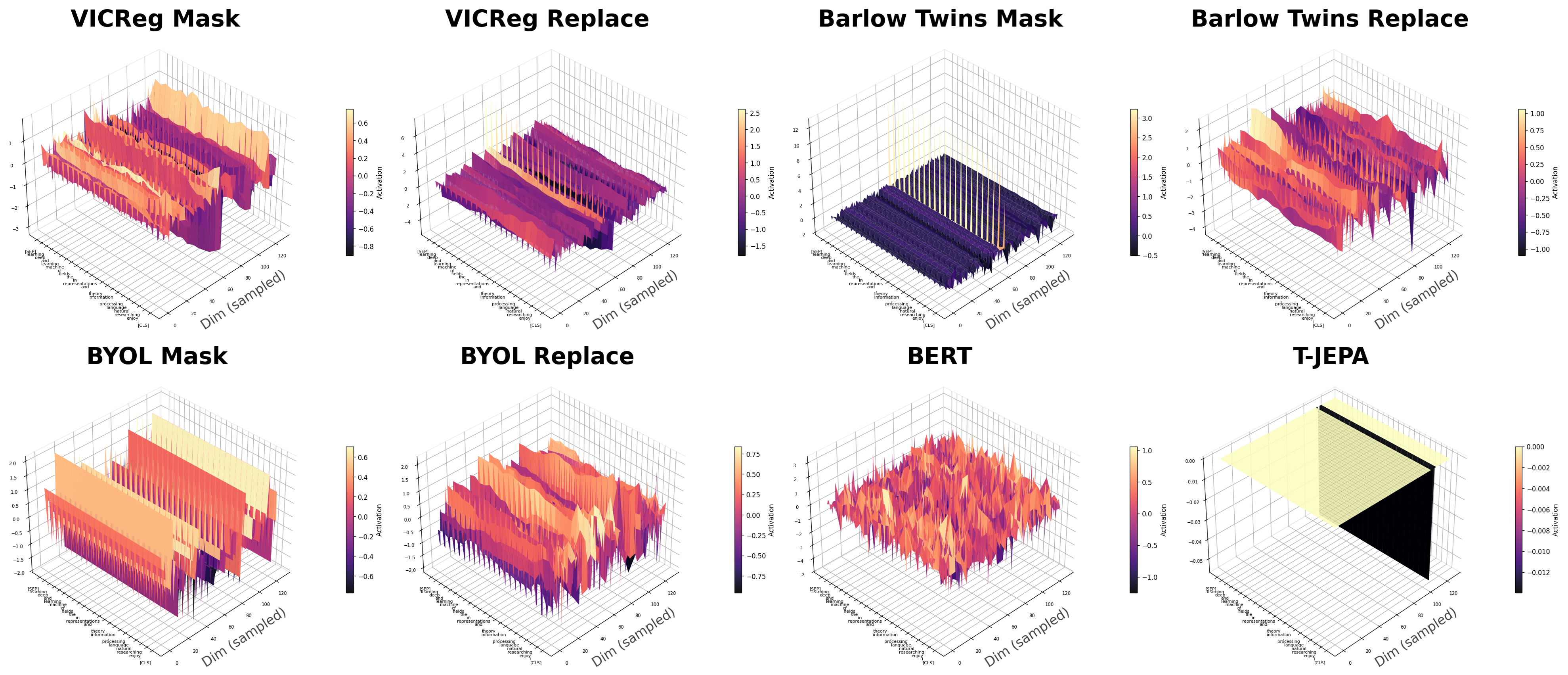}
    \caption{
        \textbf{3D activation surfaces across self-supervised text encoders.}
        Each surface shows last hidden-state activations for a held-out sentence, with token index on one axis and sampled hidden dimensions on the other. BERT exhibits strong token-wise and dimension-wise variation. VICReg, Barlow Twins, and BYOL retain visible activation structure, with variation across both tokens and dimensions. T-JEPA produces a nearly flat, low-amplitude surface, consistent with the collapse diagnostics in App.~\ref{app:collapse}.
    }
    \label{fig:rep_act_3d}
\end{figure}

\begin{figure}[t]
    \centering
    \includegraphics[width=\linewidth]{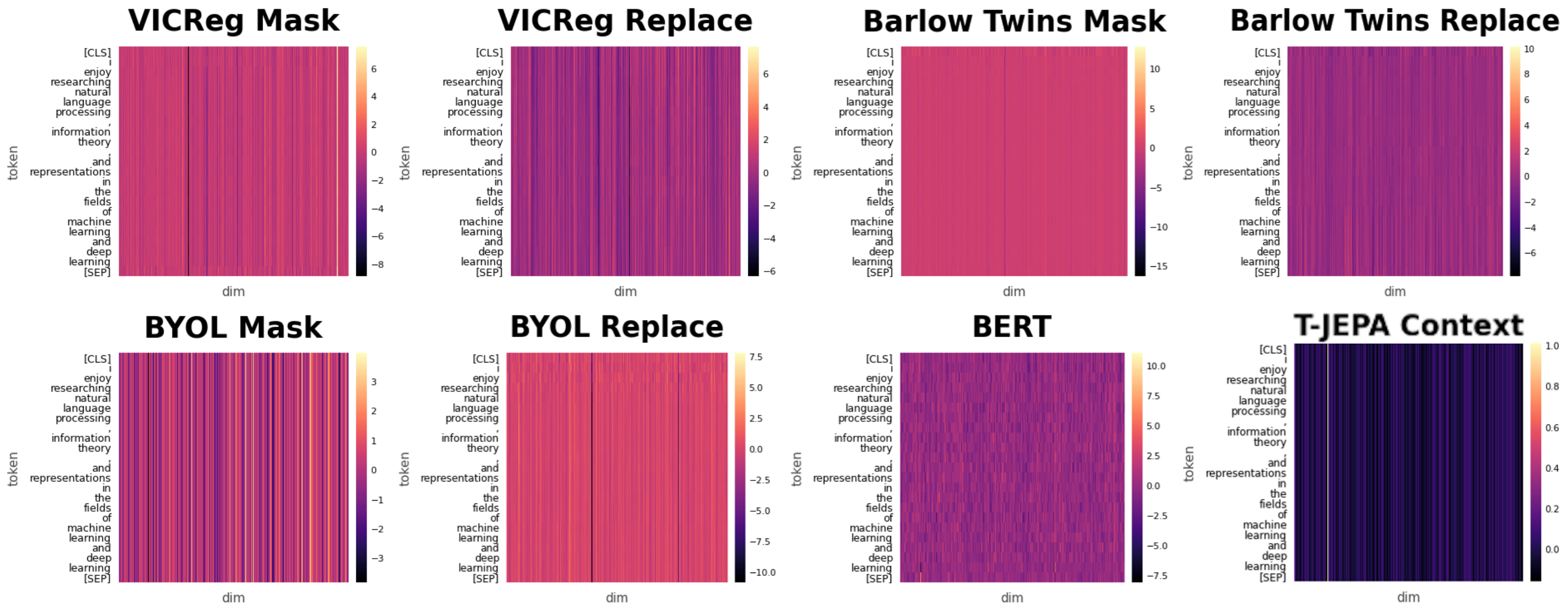}
    \caption{
        \textbf{Activation heatmaps across self-supervised text encoders.}
        Each heatmap shows last hidden-state activations as a token-by-dimension matrix for the same held-out sentence. BERT and the non-JEPA SSL baselines retain structured variation across tokens and dimensions, whereas T-JEPA is nearly uniform and low-amplitude, matching the effective-rank and cosine-collapse results.
    }
    \label{fig:rep_act_2d}
\end{figure}

\paragraph{Qualitative structure.}
A non-degenerate token representation should vary across tokens and use multiple hidden dimensions. BERT shows the strongest activation structure, with clear token-dependent patterns and substantial variation across dimensions. VICReg, Barlow Twins, and BYOL retain moderate structure, suggesting that their objectives preserve some token-level and dimension-level diversity under the same evaluation protocol.

\paragraph{T-JEPA collapse.}
T-JEPA is qualitatively different: its activations are nearly uniform and low-amplitude across both tokens and hidden dimensions. This visual pattern is consistent with the quantitative collapse signatures reported earlier: low effective rank, cosine-similarity saturation, and poor downstream transfer. The heatmaps therefore provide an intuitive view of the same failure mode: T-JEPA does not merely learn weaker features, but compresses the representation geometry itself.

\section{Visualization: T-JEPA Collapse Dynamics}
\label{app:vis_collapse}

Figure~\ref{fig:heatmap_grid} visualizes the last hidden-state activations of the T-JEPA context encoder at several training steps for the same held-out sentence. Rows correspond to tokens and columns to hidden dimensions.

\paragraph{Token-wise and dimensional compression.}
The heatmaps show two qualitative trends. First, token-specific variation is weak throughout training: different tokens share similar activation patterns, with only coarse dimension-wise structure visible. Second, training progressively compresses the activation landscape. The overall activation magnitude shrinks over time, and the remaining variation becomes concentrated in a small subset of hidden dimensions while most dimensions flatten.

This pattern complements the quantitative diagnostics in Sec.~\ref{sec:arg2} and App.~\ref{app:pairwise}. Effective-rank degeneration shows that the covariance spectrum loses dimensions; cosine-similarity saturation shows that different inputs align in representation space; the heatmaps provide a token-level view of the same process. T-JEPA does not merely produce lower-amplitude features. It progressively reduces both token-wise diversity and dimensional usage, consistent with the low-rank collapse predicted by Proposition~\ref{prop:textcollapse}.

\paragraph{Why EMA is insufficient.}
The visualization also clarifies why stop-gradient and EMA do not by themselves prevent this failure mode. EMA stabilizes the target branch by making it track the context encoder slowly, but it does not impose a constraint on the variance or covariance of the representation distribution. If both branches drift toward low-rank activations, the target remains stable while still becoming less informative. Avoiding this failure requires an additional mechanism that preserves variance across examples or dimensions, such as contrastive separation, redundancy reduction, or explicit variance regularization.

\begin{figure}[t]
    \centering
    \includegraphics[width=\linewidth]{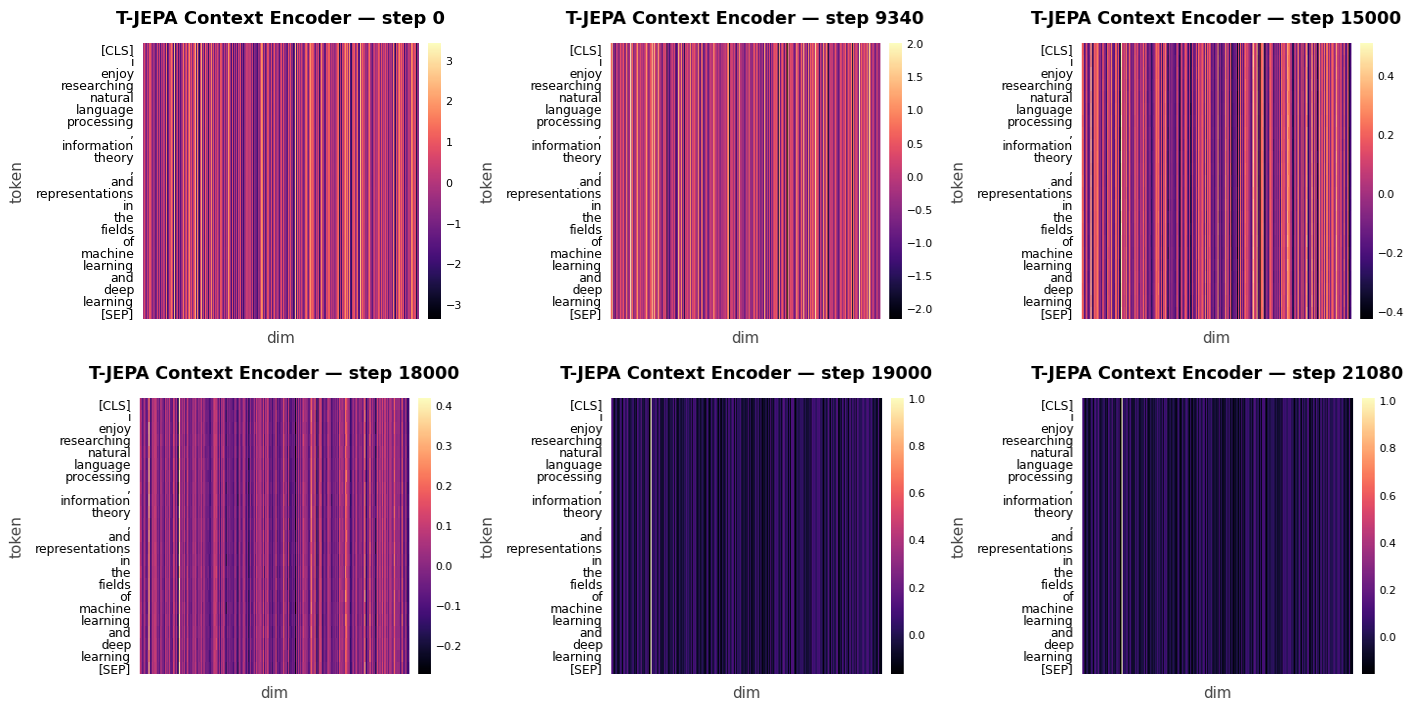}
    \caption{%
    Per-step representation heatmaps of the T-JEPA context encoder's last hidden state for the sentence
    \emph{``I enjoy researching natural language processing, information theory, and representations in the fields of machine learning and deep learning''}
    at steps $0$, $9{,}340$, $15{,}000$, $18{,}000$, $19{,}000$, and $21{,}080$.
    Rows denote tokens and columns denote hidden dimensions.%
    }
    \label{fig:heatmap_grid}
\end{figure}

\section{Comparison with JEPA-Augmented Language Models}
\label{app:discussion-llm-jepa}

Our analysis shows that a deterministic, squared-error latent
prediction objective is mismatched to token-level text: it collapses
ambiguous completions into a centroid corresponding to no real token.
This might seem to conflict with two recent systems, LLM-JEPA and
DLLM-JEPA~\citep{huang2026llmjepa, nam2026dllmjepa}, which report gains
from adding a JEPA-style loss to language model training. The conflict
is only apparent: in both systems the JEPA term never trains alone, and
the reported ablations that remove or weaken the generative objective
are themselves evidence for our thesis.

\begin{table}[t]
\centering
\caption{Comparison of the three text-oriented JEPA studies. T-JEPA is
the direct standalone baseline: JEPA applied to text with the same
backbone and recipe as I-JEPA, with no auxiliary generative loss.}
\setlength{\tabcolsep}{6pt}
\label{tab:jepa-text-comparison}
\setlength{\tabcolsep}{10pt}
\begin{tabular}{lccc}
\toprule
Property & T-JEPA (Our) & LLM-JEPA & DLLM-JEPA \\
\midrule
Encoder (BERT-style) backbone & $\checkmark$ & $\times$ & $\times$ \\
Autoregressive decoder backbone & $\times$ & $\checkmark$ & $\times$ \\
Masked diffusion LM backbone & $\times$ & $\times$ & $\checkmark$ \\
JEPA is sole training objective & $\checkmark$ & $\times$ & $\times$ \\
JEPA is auxiliary to a generative loss & $\times$ & $\checkmark$ & $\checkmark$ \\
Views from masking a single input & $\checkmark$ & $\times$ & $\checkmark$ \\
Requires external paired dataset & $\times$ & $\checkmark$ & $\times$ \\
EMA target encoder + stop-gradient & $\checkmark$ & $\times$ & $\checkmark$ \\
Generative loss resolves token ambiguity & $\times$ & $\checkmark$ & $\checkmark$ \\
Downstream performance improves & $\times$ & $\checkmark$ & $\checkmark$ \\
\bottomrule
\end{tabular}
\end{table}

\paragraph{LLM-JEPA cannot drop the language-modelling loss.}
\citet{huang2026llmjepa} sweep the generative-loss weight $\gamma$ down
to zero on NL-RX-SYNTH~\citep{locascio-etal-2016-neural} while holding
the JEPA weight $\lambda=1$: at $\gamma=0$ (JEPA loss alone) accuracy is
$0.00\%$, rising monotonically with $\gamma$ to $70.42\%$ at $\gamma=1.0$
versus $57.29\%$ for cross-entropy-only. The authors also report that
minimizing $\mathcal{L}_{\mathrm{LLM}}$ alone does not implicitly
minimize the JEPA loss. This matches our theory: the generative loss
keeps the representation informative about token identity, while JEPA
adds structure on top. It also matches what LLM-JEPA's ``views'' are:
\texttt{Text} and \texttt{Code} are fully-specified sequences, not a
masked span with several live completions, so the token-level ambiguity
of Proposition~\ref{prop:text-nonconcentration} is absorbed by
$\mathcal{L}_{\mathrm{LLM}}$ before JEPA ever touches the representation.

\paragraph{DLLM-JEPA cannot drop the diffusion loss or its asymmetric
views.} DLLM-JEPA's generative term $\mathcal{L}_{\mathrm{diff}}$ is
itself a masked, distributional objective, structurally akin to masked
language modelling, and is never removed in any reported configuration;
\citet{nam2026dllmjepa} instead ablate only the JEPA side. On LLaDA-8B
GSM8K~\citep{cobbe2021trainingverifierssolvemath}, the diffusion-only baseline reaches $39.5\%$/$42.6\%$ (0-/4-shot).
Removing the asymmetric JEPA view construction drops this slightly to
$38.9\%$/$42.5\%$; keeping asymmetric views but replacing the predictor
with identity yields only $43.2\%$/$44.4\%$; the full method reaches
$44.9\%$/$61.3\%$. $\mathcal{L}_{\mathrm{diff}}$ is never weakened in
this ablation — it is the JEPA-side components that prove individually
insufficient. Consistently, the paper's own diagnostics show the pooled
JEPA embeddings' effective rank and cosine diversity are essentially
unchanged from the pre-trained base, since $\mathcal{L}_{\mathrm{diff}}$
already resolves ambiguity on its own.

\paragraph{T-JEPA is the missing ``JEPA-only'' cell.} T-JEPA
(Section~\ref{sec:emp_evidence}) is the natural counterfactual: a pure latent-prediction loss
over masked spans with no distributional term, trained under the same
recipe as I-JEPA. It exhibits the full failure chain, falling to
chance-level classification and zero retrieval — effectively the
$\gamma=0$ cell that \citet{huang2026llmjepa} show collapses to empty
output, and the ``no diffusion loss'' cell that \citet{nam2026dllmjepa}
never actually train.

\paragraph{A general pattern, not one specific to JEPA.} An auxiliary
latent-prediction loss helps once a primary objective already supplies
task-relevant structure, but fails, or is never tried, on its own — a
pattern also seen in multi-task learning~\citep{Caruana1997} and in
auxiliary losses added to reinforcement-learning
rewards~\citep{jaderberg2017reinforcement}. LLM-JEPA and DLLM-JEPA fit
the same template: the generative loss resolves ambiguity and keeps
condition~(P) satisfied, and JEPA then adds structure on top of an
already well-formed representation. Table~\ref{tab:jepa-text-comparison}
summarizes this: the two properties that correlate with downstream
success — a distributional generative anchor and a non-standalone JEPA
objective — are exactly what T-JEPA lacks.

\section{Discussion: SIGReg Prevents Collapse But Does Not Resolve
Predictability or Centroid Degeneracy}
\label{app:sigreg}

SIGReg~\citep{Balestriero2025LeJEPA} is a distribution-matching
regularizer that penalizes deviation of the pooled representation
distribution from an isotropic Gaussian, proposed as a principled,
heuristic-free alternative to stop-gradient asymmetry or explicit
negatives for preventing collapse. We show it resolves Argument~II
but not Arguments~I or~III, because it constrains a \emph{marginal}
statistic while both remaining arguments are properties of the
\emph{conditional} target distribution given a single context.

\paragraph{SIGReg resolves Argument II.}
If the SIGReg penalty is minimized, $\Sigma_z\approx\sigma^2 I_d$ for
some $\sigma^2>0$, so $u^\top\Sigma_z u\approx\sigma^2$ for every unit
vector $u$ and Lemma~\ref{lem:noncollapse} gives
$\lambda_{\min}(\Sigma_z)>0$: the low-rank equivalence-class solution
of Prop.~\ref{prop:textcollapse} is ruled out.

\paragraph{A marginal constraint cannot certify a conditional property.}
Let $\mu(x_C)=\mathbb{E}[z_T^{(j)}\mid x_C]$. By the law of total
variance,
\begin{equation}
    \Sigma_z
    =
    \underbrace{\mathbb{E}_{x_C}\!\left[\operatorname{Cov}(z_T^{(j)}\mid x_C)\right]}_{\text{irreducible (Arg.~III)}}
    +
    \underbrace{\operatorname{Cov}_{x_C}(\mu(x_C))}_{\text{context signal (Arg.~I)}}.
    \label{eq:total-var-sigreg}
\end{equation}
SIGReg constrains only the sum. It has no mechanism that separately
bounds either term, so satisfying it is compatible with any split
between them.

\begin{proposition}[Isotropic marginal, vanishing context signal]
\label{prop:sigreg-blind-to-mi}
Let $z_T^{(j)}=z_{\mathrm{sig}}(x_C)+z_{\mathrm{nui}}$ with
$z_{\mathrm{nui}}\sim\mathcal N(0,\sigma^2 I_d)$ independent of $x_C$
and $\|z_{\mathrm{sig}}(x_C)\|_2\le\epsilon$ a.s. Then
$\Sigma_z\to\sigma^2 I_d$ as $\epsilon\to0$ (SIGReg penalty vanishes),
while $I(z_C;z_T^{(j)})\to0$.
\end{proposition}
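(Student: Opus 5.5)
We treat the proposition as two separate limit statements, one about the marginal second moment and one about mutual information, and exploit the independence of the nuisance term throughout. Here $\Sigma_z$ refers to the covariance of the target representation $z_T^{(j)}$, as in Eq.~\ref{eq:total-var-sigreg}. $z_C=f_\theta(x_C)$ is a deterministic function of $x_C$.

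\textbf{Covariance step.} We use the decomposition Eq.~\ref{eq:total-var-sigreg}. Since $z_{\mathrm{nui}}$ is independent of $x_C$, we have $\mu(x_C)=z_{\mathrm{sig}}(x_C)$ and $\operatorname{Cov}(z_T^{(j)}\mid x_C)=\sigma^2 I_d$. Hence
\begin{equation}
\Sigma_z=\sigma^2 I_d+\operatorname{Cov}_{x_C}\!\left(z_{\mathrm{sig}}(x_C)\right).
\end{equation}
The second term is bounded in operator norm by $\mathbb{E}\|z_{\mathrm{sig}}\|_2^2\le\epsilon^2$. Therefore $\|\Sigma_z-\sigma^2 I_d\|_{\mathrm{op}}\le\epsilon^2\to0$.

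For the claim that the SIGReg penalty vanishes, we argue about the full law, since the penalty tests the whole distribution and not just the covariance. The law of $z_T^{(j)}$ is $\mathcal N(0,\sigma^2 I_d)$ convolved with the law of $z_{\mathrm{sig}}$, which is supported in the ball of radius $\epsilon$. The $W_2$ distance between this law and $\mathcal N(0,\sigma^2I_d)$ is therefore at most $\epsilon$, via the coupling that shares $z_{\mathrm{nui}}$. SIGReg is built from characteristic-function or sliced statistics, which are continuous under weak convergence, so the penalty tends to its value at the isotropic Gaussian, namely zero.

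\textbf{Mutual-information step.} By the data processing inequality (the same argument as Lemma~\ref{lem:dpi}), $I(z_C;z_T^{(j)})\le I(x_C;z_T^{(j)})$. Conditional on $x_C$, the target is a Gaussian channel: $z_T^{(j)}=s+z_{\mathrm{nui}}$ with input $s=z_{\mathrm{sig}}(x_C)$ and independent noise. Since $x_C\to s\to z_T^{(j)}$ is a Markov chain and $s$ is a function of $x_C$, we get $I(x_C;z_T^{(j)})=I(s;s+z_{\mathrm{nui}})$.

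Gaussian channel capacity under the power constraint $\mathbb{E}\|s\|^2\le\epsilon^2$ gives
\begin{equation}
I(s;s+z_{\mathrm{nui}})\le \tfrac{d}{2}\log\!\left(1+\tfrac{\epsilon^2}{d\sigma^2}\right)\le\tfrac{\epsilon^2}{2\sigma^2}.
\end{equation}
This bound tends to $0$ as $\epsilon\to0$. As a sanity check, the same bound also follows from $h(z_T)\le h$ of the Gaussian with matching covariance $\Sigma_z$, minus $h(z_T\mid s)=h(z_{\mathrm{nui}})$. That route gives $\tfrac12\log\det(I+\operatorname{Cov}(s)/\sigma^2)$, which is the same quantity.

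\textbf{Main obstacle.} The MI bound is routine. The delicate point is the interpretation of ``SIGReg penalty vanishes'': the statement only asserts $\Sigma_z\to\sigma^2I_d$, but SIGReg matches full distributions. This is why we plan to prove weak (indeed $W_2$) convergence to $\mathcal N(0,\sigma^2 I_d)$ rather than only covariance convergence. We also need to fix $\sigma^2$ to the isotropic scale SIGReg targets, or note that the penalty is evaluated relative to that scale.

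A secondary subtlety is that $I(z_C;z_T^{(j)})$ presupposes a joint law between $z_C$ and $z_T^{(j)}$. The clean way to handle this is to make explicit that $z_{\mathrm{nui}}$ is independent of $x_C$, and hence of $z_C$, which is what justifies both the DPI step and the Markov structure.
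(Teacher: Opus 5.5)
Your proof is correct and has the same skeleton as the paper's: the law-of-total-variance split of Eq.~\ref{eq:total-var-sigreg} for the covariance claim, then the data processing inequality to pass from $x_C$ to $z_C$. The difference is in how the limits are justified.

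For the mutual information, the paper only asserts that as $\epsilon\to0$ the target ``becomes independent of $x_C$'', and hence $I(x_C;z_T^{(j)})\to0$. That inference is not automatic, because mutual information is only lower semicontinuous under weak convergence. Without the noise term, $I(x_C;\epsilon\,g(x_C))$ is the same for every $\epsilon>0$ when $g$ is injective, even though the law degenerates to a point mass. So the fixed-variance Gaussian nuisance has to be used explicitly. Your reduction to $I(s;s+z_{\mathrm{nui}})$, followed by the Gaussian-channel bound $\tfrac12\log\det(I_d+\operatorname{Cov}(s)/\sigma^2)\le\epsilon^2/(2\sigma^2)$, does exactly this. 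It also gives a quantitative rate valid at every finite $\epsilon$.

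For the SIGReg claim, the paper equates ``SIGReg penalty vanishes'' with $\Sigma_z\to\sigma^2 I_d$. Covariance matching alone does not make a distribution-matching penalty vanish. Your coupling bound $W_2\le\epsilon$ to $\mathcal N(0,\sigma^2 I_d)$, combined with weak continuity of characteristic-function statistics, supplies the missing statement about the full law. Your caveat that $\sigma^2$ must equal SIGReg's target scale is a real hypothesis that the paper leaves implicit.

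The paper's version buys brevity; yours buys rigor and an explicit bound.
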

\begin{proof}
As $\epsilon\to0$, the between-context term in
Eq.~\ref{eq:total-var-sigreg} vanishes while the within-context term
saturates at $\sigma^2 I_d$, so $z_T^{(j)}$ becomes independent of
$x_C$ and $I(x_C;z_T^{(j)})\to0$; Lemma~\ref{lem:dpi} then gives
$I(z_C;z_T^{(j)})\to0$.
\end{proof}
This is exactly the failure the MI proxy of Sec.~\ref{app:mi_proxy} is designed to
catch, and SIGReg supplies no defense against it: Argument~I remains
open.

\paragraph{Argument III is also untouched.}
Even granting nonzero context signal, the within-context term of
Eq.~\ref{eq:total-var-sigreg} is exactly the quantity
Prop.~\ref{prop:textvar-lower} bounds below under
Assumption~\ref{assump:separated-continuations}, and SIGReg places no
constraint on it. The optimal predictor is still the centroid of
Eq.~\ref{eq:centroid}: SIGReg forbids resolving this term by merging
continuations together, but supplies no alternative mechanism for
representing them as anything other than an averaged point.

\begin{mainresult}[title=Consequence IV: Non-Collapse Is Necessary; Not Sufficient]
SIGReg guarantees $\lambda_{\min}(\Sigma_z)>0$, closing the
collapse route of Prop.~\ref{prop:textcollapse}. It gives no
corresponding guarantee for predictability or centroid degeneracy: an
encoder can satisfy SIGReg exactly while remaining unpredictable from
context (Prop.~\ref{prop:sigreg-blind-to-mi}) and while its
Bayes-optimal predictor still collapses distinct continuations onto a
single non-existent centroid (Prop.~\ref{prop:textvar-lower}).
Non-collapse is necessary but, on text, not sufficient.
\end{mainresult}

\paragraph{Future work.} We predict that T-JEPA trained with an added
SIGReg penalty would keep effective rank and cosine similarity healthy
throughout training, yet still show early MI-proxy saturation,
elevated irreducible variance, and downstream transfer well below
BERT, i.e.\ a representation that passes geometric collapse tests
without becoming predictable or semantically faithful. Testing this
with the full five-metric protocol of Sec.~\ref{sec:emp_evidence} is a natural next step.

\clearpage
\section{Mathematical Notation}
\label{app:notation}

\begin{table}[H]
\centering
\small
\begin{tabular}{c|p{4.4cm}|c|p{4.4cm}}
\toprule
\textbf{Symbol} & \textbf{Description} & \textbf{Symbol} & \textbf{Description} \\
\midrule
\multicolumn{4}{l}{\textit{Architecture and encoders}} \\
$f_\theta$ & Context encoder; $z_C=f_\theta(x_C)$ &
$f_{\bar\theta}$ & EMA target encoder \\
$g_\phi$ & Predictor; $(z_C,p_j)\mapsto \hat z_T^{(j)}$ &
$p_j$ & Positional query for target position $j$ \\
$\tau$ & EMA momentum; $\bar\theta \leftarrow \tau\bar\theta+(1-\tau)\theta$ &
$d$ & Latent representation dimension \\
$\operatorname{sg}(\cdot)$ & Stop-gradient operator &
$\theta,\bar\theta,\phi$ & Context, target, and predictor parameters \\
\midrule
\multicolumn{4}{l}{\textit{Data and masking}} \\
$x$ & Full unmasked input &
$x_C$ & Masked context view \\
$x^{(j)}$ & Unit at position $j$; patch or token &
$C,T$ & Context and target index sets \\
$N$ & Number of input units or contrastive candidates &
$\mathcal{V}$ & Token vocabulary \\
$\mathcal{X}$ & Input space &
$d(i,j)$ & Distance between image patch centers \\
$S$ & Masked text span &
$x_{j\leftarrow v}$ & Sequence with token $v$ placed at position $j$ \\
$x_{S\leftarrow s}$ & Sequence with span completion $s$ placed in $S$ &
$\mathcal{V}_\epsilon(x_C)$ & Plausible token set given context $x_C$ \\
\midrule
\multicolumn{4}{l}{\textit{Representations and objectives}} \\
$z_C$ & Context representation &
$z_T^{(j)}$ & Target representation at position $j$ \\
$z^*$ & Target random variable &
$\bar z^{(j)}(z_C,p_j)$ & Conditional mean target \\
$\mathcal{L}_{\mathrm{JEPA}}$ & JEPA squared-error objective &
$\mathcal{L}_{\mathrm{InfoNCE}}$ & InfoNCE contrastive loss \\
$h_j(v;x_C)$ & Target representation induced by token $v$ &
$h_S(s;x_C)$ & Target representation induced by span completion $s$ \\
\midrule
\multicolumn{4}{l}{\textit{Information-theoretic quantities}} \\
$H(X)$ & Shannon entropy &
$H(X\mid Y)$ & Conditional entropy \\
$I(X;Y)$ & Mutual information &
$\widehat{I}_{\mathrm{NCE}}(z_C;z_T)$ & InfoNCE predictability proxy \\
\midrule
\multicolumn{4}{l}{\textit{Collapse and spectral quantities}} \\
$\Sigma_z$ & Representation covariance matrix &
$\lambda_i$ & Eigenvalue of $\Sigma_z$ \\
$\mathrm{erank}(\Sigma_z)$ & Effective rank of $\Sigma_z$ &
$\Delta$ & Minimum separation between plausible targets \\
$\bar{s}$ & Mean pairwise cosine similarity &
$s_{95},\sigma_s$ & 95th percentile and standard deviation of cosine similarity \\
\midrule
\multicolumn{4}{l}{\textit{Variance quantities}} \\
$\operatorname{Var}(z^*\mid z_C,p_j)$ & Irreducible conditional target variance &
$\sigma_{\mathrm{img}}^2$ & Bound on image conditional target variance \\
$\sigma_{\mathrm{vis}}^2$ & Nonzero visual target variation beyond position &
$\widehat{\operatorname{Var}}(z^*\mid z_C,p_j)$ & Empirical target-variance estimate \\
$K$ & Number of sampled variants or completions &
$w_k$ & Importance weight for sampled completion $k$ \\
\midrule
\multicolumn{4}{l}{\textit{General notation}} \\
$\mathbb{R}^d$ & $d$-dimensional Euclidean space &
$\mathbb{E}[\cdot]$ & Expectation \\
$\mathcal{N}(0,\sigma^2 I)$ & Isotropic Gaussian distribution &
$[N]$ & Index set $\{1,\ldots,N\}$ \\
\bottomrule
\end{tabular}
\caption{Notation used throughout the paper and appendices.}
\label{tab:notation}
\end{table}

\end{document}